\newtheorem{theorem}{Theorem}
\newtheorem{lemma}[theorem]{Lemma}
\newtheorem{corollary}[theorem]{Corollary}
\title{Why Deep Neural Networks for Function Approximation?}
\author{Shiyu Liang \& R. Srikant  \\
Coordinated Science Laboratory\\
and\\
Department of Electrical and Computer Engineering\\
University of Illinois at Urbana-Champaign\\
Urbana, IL 61801, USA \\
\texttt{\{sliang26,rsrikant\}@illinois.edu} 
}
\begin{document}

\maketitle

\begin{abstract}
  Recently there has been much interest in understanding why deep neural networks are preferred to shallow networks. We show that, for a large class of piecewise smooth functions, the number of neurons needed by a shallow network to approximate a function is exponentially larger than the corresponding number of neurons needed by a deep network for a given degree of function approximation. First, we consider univariate functions on a bounded interval and require a neural network to achieve an approximation error of $\varepsilon$ uniformly over the interval. We show that shallow networks (i.e., networks whose depth does not depend on $\varepsilon$) require $\Omega(\text{poly}(1/\varepsilon))$ neurons while deep networks (i.e., networks whose depth grows with $1/\varepsilon$) require $\mathcal{O}(\text{polylog}(1/\varepsilon))$ neurons. We then extend these results to certain classes of important multivariate functions. Our results are derived for neural networks which use a combination of rectifier linear units (ReLUs) and binary step units, two of the most popular type of activation functions. Our analysis builds on a simple observation: the multiplication of two bits can be represented by a ReLU.
 \end{abstract}

\section{Introduction}
Neural networks have drawn significant interest from the machine learning community, especially due to their recent empirical successes (see the surveys \citep{bengio2009learning}). Neural networks are used to build state-of-art systems in various applications such as image recognition, speech recognition, natural language process and others (see,  \citealt{krizhevsky2012imagenet}; \citealt{goodfellow2013maxout}; \citealt{wan2013regularization}, for example). The result that neural networks are universal approximators  is one of the theoretical results most frequently cited to justify the use of neural networks in these applications. Numerous results have shown the universal approximation property  of  neural networks in approximations of different function classes, (see, e.g., \citealt{cybenko1989approximation};   \citealt{hornik1989multilayer}; \citealt{funahashi1989approximate}; \citealt{hornik1991approximation}; \citealt{chui1992approximation}; \citealt{barron1993universal}; \citealt{poggio2015notes}). 

All these results and many others provide upper bounds on the network size and assert that small approximation error can be achieved if the network size is sufficiently large. More recently, there has been much interest in understanding the approximation capabilities of deep versus shallow networks.  \citet{delalleau2011shallow} have shown that there exist deep sum-product networks which cannot be  approximated by shallow sum-product networks unless they use an exponentially larger amount of units or neurons. \citet{montufar2014number} have shown that the number of linear region increases exponentially with the number of layers in the neural network. \citet{telgarsky2016benefits} has established such a result for neural networks, which is the subject of this paper. \citet{eldan2015power} have shown that, to approximate a specific function, a two-layer network requires an exponential number of neurons in the input dimension, while a three-layer network requires a polynomial number of neurons. These recent papers demonstrate the power of deep networks by showing that depth can lead to an exponential reduction in the number of neurons required, for specific functions or specific neural networks. Our goal here is different: we are interested in function approximation specifically and would like to show that for a given upper bound on the approximation error, shallow networks require exponentially more neurons than deep networks for a large class of functions.

The multilayer neural networks considered in this paper are allowed to use either rectifier linear units (ReLU) or binary step units (BSU), or any combination of the two. The main contributions of this paper are
\begin{itemize}[leftmargin=*]
\item We have shown that, for {$\varepsilon$-approximation} of functions with enough piecewise smoothness, a multilayer neural network which uses $\Theta(\log(1/\varepsilon))$ layers only needs $\mathcal{O}(\text{poly}\log(1/\varepsilon))$ neurons, while $\Omega(\text{poly}(1/\varepsilon))$ neurons are required by  neural networks with $o(\log(1/\varepsilon))$ layers.  In other words, shallow networks require exponentially more neurons than a deep network to achieve the level of accuracy for function approximation.
\item We have shown that for all differentiable and strongly convex functions, multilayer neural networks need $\Omega(\log(1/\varepsilon))$ neurons to achieve an {$\varepsilon$-approximation}. Thus, our results for deep networks are tight.
\end{itemize}
The outline of this paper is as follows. In Section~\ref{sec::preliminary}, we present necessary definitions and the problem statement. In Section~\ref{sec::upperbound}, we present upper bounds on network size, while the lower bound is provided in Section~\ref{sec::lowerbound}. Conclusions are presented in Section~\ref{sec::conclusions}.  Around the same time that our paper was uploaded in arxiv, a similar paper was also uploaded in arXiv by \citet{2016arXiv161001145Y}. The results in the two papers are similar in spirit, but the details and the general approach are substantially different.

\section{Preliminaries and problem statement}\label{sec::preliminary}
In this section, we present definitions on feedforward neural networks and formally present the problem statement.
\subsection{Feedforward Neural Networks}
 A \textit{feedforward neural network} is composed of layers of computational units and defines a unique function $\tilde{f}:\mathbb{R}^d\rightarrow\mathbb{R}$. Let $L$ denote the number of hidden layers, $N_{l}$ denote the number of units of layer $l$, $N=\sum_{l=1}^{L}N_{l}$ denote the size of the neural network, vector $\mbox{\boldmath$x$}=(x^{(1)},...,x^{(d)})$ denote the input of neural network, $z_{j}^{l}$ denote the output of the $j$th unit in layer $l$, $w_{i,j}^{l}$ denote the weight of the edge connecting unit $i$ in layer $l$ and unit $j$ in layer $l+1$, $b_{j}^{l}$ denote the bias of the unit $j$ in layer $l$. Then outputs between layers of the feedforward neural network can be characterized by  following iterations:
 \begin{align*}z_{j}^{l+1}=\sigma\left(\sum_{i=1}^{N_{l}}w_{i,j}^{l}z_{i}^{l}+b_{j}^{l+1}\right),\quad l\in[L-1],j\in[N_{l+1}],\end{align*}
with 
\begin{align*}\text{input layer: }&z_{j}^{1}=\sigma\left(\sum_{i=1}^{d}w_{i,j}^{0}x^{(i)}+b_{j}^{1}\right),\quad j\in[N_{1}],\\
\text{output layer: }&\tilde{f}(\mbox{\boldmath$x$})=\sigma\left(\sum_{i=1}^{N_{L}}w_{i,j}^{L}z_{i}^{L}+b_{j}^{L+1}\right).\end{align*}
Here, $\sigma(\cdot)$ denotes the activation function and  $[n]$ denotes the index set $[n]=\{1,...,n\}$. In this paper, we only consider two important types of activation functions:
\begin{itemize}
\item{Rectifier linear unit: }$\sigma(x)=\max\{0,x\}$, $x\in\mathbb{R}$.
\item {Binary step unit: }$\sigma(x)=\mathbb{I}\{x\ge0\}, x\in\mathbb{R}$. 
\end{itemize}
 We call the number of layers and the number of neurons in the network as the \textit{depth} and the \textit{size} of the feedforward neural network, respectively. We use the set $\mathcal{F}(N,L)$ to denote the function set containing all feedforward neural networks of depth $L$, size $N$ and composed of a combination of rectifier linear units (ReLUs) and binary step units. We say one feedforward neural network is \textit{deeper} than the other network if and only if it has a larger depth. Through this paper, the terms \textit{feedforward neural network} and \textit{multilayer neural network} are used interchangeably.

\subsection{Problem Statement}
In this paper, we focus on bounds on the size of the feedforward neural network function approximation.  Given a function $f$, our goal is to understand whether a multilayer neural network $\tilde{f}$ of depth $L$ and size $N$ exists such that it solves \begin{equation}\label{eq::A}\min_{\tilde{f}\in \mathcal{F}(N,L)}\|f-\tilde{f}\|\le \varepsilon.\end{equation}
Specifically, we aim to answer the following questions:
\begin{itemize}
\item[1] Does there exists $L(\varepsilon)$ and $N(\varepsilon)$ such that \eqref{eq::A} is satisfied? We will refer to such $L(\varepsilon)$ and $N(\varepsilon)$ as upper bounds on the depth and size of the required neural network. 
\item[2]  Given a fixed depth $L$, what is the minimum value of $N$ such that \eqref{eq::A} is satisfied? We will refer to such an $N$ as a lower bound on the size of a neural network of a given depth $L$.
\end{itemize}
The first question asks what  depth and size are sufficient to guarantee an $\varepsilon$-approximation. The second question asks, for a fixed depth, what is the minimum size of a neural network required to guarantee an $\varepsilon$-approximation. Obviously, tight bounds in the answers to these two questions provide tight bounds on the network size and depth required for function approximation. Besides, solutions to these two  questions together can be further used to answer the following question. If a deeper neural network of size $N_{d}$ and a shallower neural network of size $N_{s}$ are used to approximate the same function with the same error, then how fast does the ratio $N_{d}/N_{s}$ decay to zero as the error decays to zero? 
 
\begin{figure*}[t]
\vspace{-0.9cm}
\centering
  \includegraphics[width=0.8\linewidth]{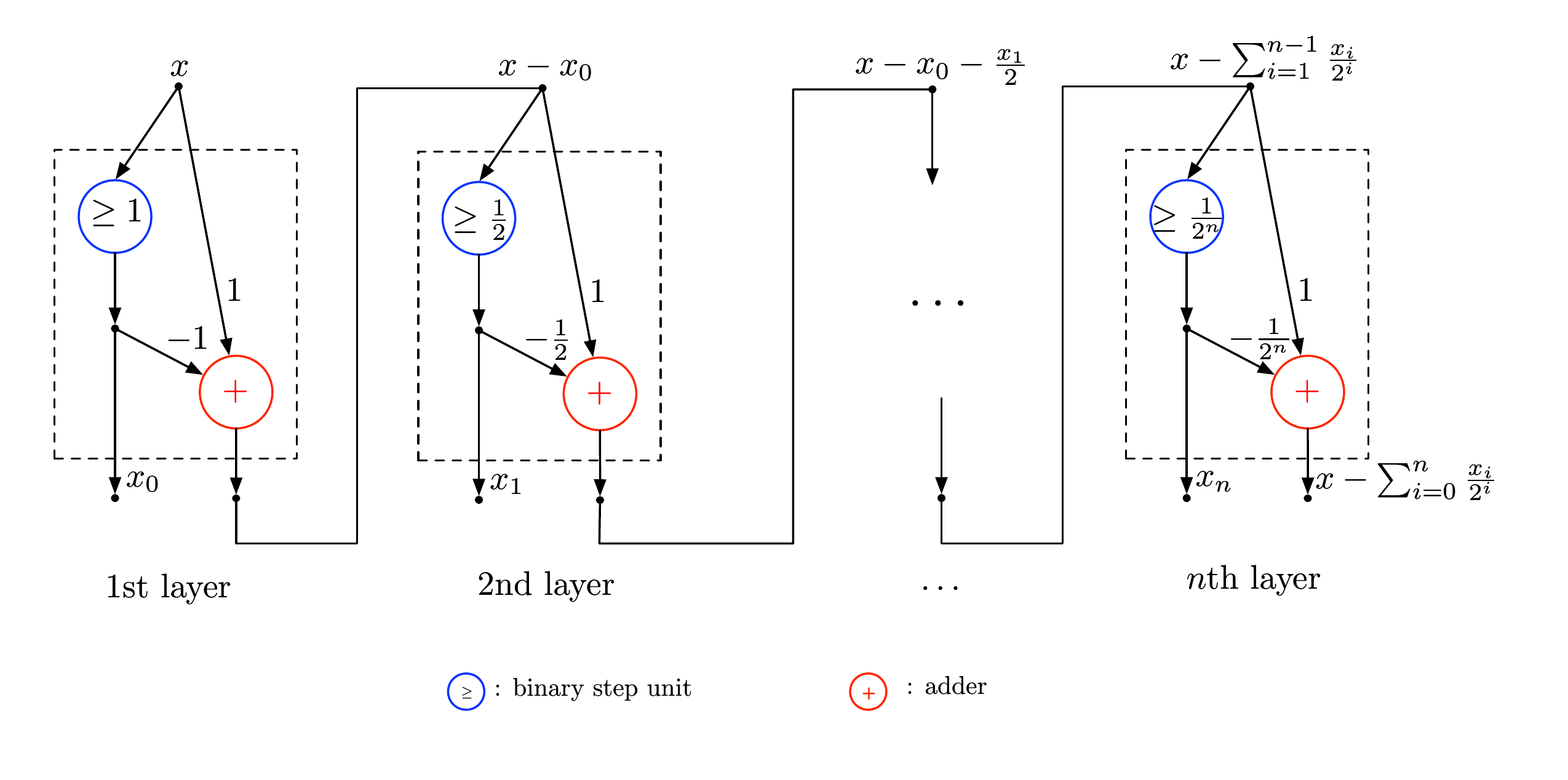}
\vspace{-0.7cm}
\caption{An $n$-layer neural network structure for finding the binary expansion of a number in $[0,1]$.}
\label{fig::binaryexpansion}
\end{figure*}

\section{Upper bounds on function approximations}\label{sec::upperbound}
In this section, we present upper bounds on the size of the multilayer neural network which are sufficient for function approximation. Before stating the results, some notations and  terminology deserve further explanation. First, the upper bound on the network size represents the number of neurons required at most for approximating a given function with a certain error. Secondly, the notion of the approximation is the $L_{\infty}$ distance: for two functions $f$ and $g$, the $L_{\infty}$ distance between these two function is the maximum point-wise disagreement over the cube $[0,1]^{d}$. 

\subsection{Approximation of univariate functions}
In this subsection, we present all results on approximating univariate functions. 
We first present a theorem on the size of the network for approximating a simple quadratic function. As part of the proof, we present the structure of the multilayer feedforward neural network used and show how the neural network parameters are chosen. Results on approximating general functions can be found in Theorem~\ref{thm::polynomials} and~\ref{thm::anyfunction}.

\begin{theorem}\label{thm::quadratic}
For function $f(x)=x^{2},x\in[0,1]$, there exists a multilayer neural network $\tilde{f}(x)$ with $\mathcal{O}\left(\log\frac{1}{\varepsilon}\right)$layers, $\mathcal{O}\left(\log\frac{1}{\varepsilon}\right)$ binary step units and $\mathcal{O}\left(\log\frac{1}{\varepsilon}\right)$ rectifier linear units such that $|f(x)-\tilde{f}(x)|\le\varepsilon,\quad \forall x\in[0,1].$
\end{theorem}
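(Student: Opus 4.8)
The plan is to build $\tilde f$ in two interleaved stages: first extract the leading $n$ binary digits of the input $x$, and then assemble the square of the truncated number using the fact that the product of a number in $[0,1]$ with a bit is a single ReLU. Throughout I take $n=\lceil\log_2(2/\varepsilon)\rceil=\mathcal{O}(\log(1/\varepsilon))$.

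For the binary expansion (the structure in Figure~\ref{fig::binaryexpansion}), I would iterate the shift map $y_1=x$, $b_k=\mathbb{I}\{y_k\ge 1/2\}$, $y_{k+1}=2y_k-b_k$. A one-line check shows that if $y_k\in[0,1]$ then $y_{k+1}\in[0,1]$, so $b_k$ is exactly the $k$th binary digit of $x$ and the truncation $x_n=\sum_{i=1}^n b_i2^{-i}$ satisfies $|x-x_n|\le 2^{-n}$. Each step costs one binary step unit (to produce $b_k$ from $y_k-1/2$) and one ReLU (to carry $y_{k+1}=\sigma(2y_k-b_k)$ forward, the rectifier acting as the identity since its argument is nonnegative), together with a few ReLU ``pass-through'' units to propagate stored values. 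This uses $\mathcal{O}(n)$ layers and $\mathcal{O}(n)$ units of each type.

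The key step is the squaring, and the main obstacle is to do it with only $\mathcal{O}(n)$ neurons rather than the $\mathcal{O}(n^2)$ one gets by naively expanding $x_n^2=\sum_{i,j}b_ib_j2^{-i-j}$. I would instead accumulate the partial sums $s_k=\sum_{i=1}^k b_i2^{-i}$ together with their squares via the recursion
\begin{equation*}
s_k^2=s_{k-1}^2+2^{-(k-1)}\,(s_{k-1}b_k)+2^{-2k}b_k,
\end{equation*}
using $b_k^2=b_k$. The crucial observation is that, since $s_{k-1}\in[0,1]$ and $b_k\in\{0,1\}$, the product is exactly one ReLU, $s_{k-1}b_k=\sigma(s_{k-1}+b_k-1)$ (this is precisely the bit-multiplication identity, now with one factor a real number in $[0,1]$). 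Hence each layer updates the pair $(s_k,s_k^2)$ with a single product ReLU plus constant-size bookkeeping, so the squaring stage adds only $\mathcal{O}(n)$ ReLUs. Interleaving this recursion with the bit-extraction stage, so that $b_k$ is consumed on the layer in which it is produced, keeps the total depth at $\mathcal{O}(n)$.

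Finally I would set $\tilde f(x)=s_n^2=x_n^2$, which the network computes exactly, so the entire error is the truncation error: $|x^2-\tilde f(x)|=|x-x_n|\,|x+x_n|\le 2\cdot 2^{-n}\le\varepsilon$ by the choice of $n$. Combining the layer and neuron counts from the two stages yields $\mathcal{O}(\log(1/\varepsilon))$ layers, binary step units, and rectifier linear units, as claimed. The delicate points to verify are that the shift map stays in $[0,1]$ (including the endpoint $x=1$, where every extracted bit equals $1$), and that the accumulators $s_k$ and $s_k^2$ can be relayed between layers by nonnegative-argument ReLUs without distortion.
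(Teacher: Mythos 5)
Your proposal is correct, and it rests on the same two pillars as the paper's own proof: extraction of the binary digits of $x$ by a deep chain of binary step units (your shift map $y_{k+1}=2y_k-b_k$ is exactly the structure in Figure~\ref{fig::binaryexpansion}), and the observation that multiplying a bit by a real number in a bounded range costs a single ReLU. Where you genuinely differ is in how the square of the truncation is assembled. The paper waits until all bits $x_0,\dots,x_n$ and the truncation $\tilde{x}=\sum_j x_j 2^{-j}$ are available, and then uses the flat expansion $\tilde{x}^2=\sum_i x_i\cdot\left(2^{-i}\tilde{x}\right)=\sum_i\max\left(0,\,2(x_i-1)+2^{-i}\tilde{x}\right)$, i.e., $n+1$ product-ReLUs placed in one additional hidden layer; you instead run the telescoping recursion $s_k^2=s_{k-1}^2+2^{-(k-1)}\sigma(s_{k-1}+b_k-1)+2^{-2k}b_k$ interleaved with the bit extraction, carrying the pair $(s_k,s_k^2)$ forward with constant width per layer. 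Both versions compute the square of the truncated input exactly, so the error analysis reduces to the same truncation bound $|x^2-s_n^2|\le 2\cdot 2^{-n}$, and both yield $\mathcal{O}(\log(1/\varepsilon))$ layers, binary step units, and ReLUs. The trade-off: the paper's parallel form generalizes verbatim to higher powers, since the identity $\tilde{x}\cdot g=\sum_j x_j\,(2^{-j}g)$ is precisely the layer-to-layer iteration used for polynomials in Theorem~\ref{thm::polynomials}, whereas your pipelined recursion is tailored to squaring (extending it to $x^p$ would require tracking $p$ running powers), but it buys constant width in the squaring stage and consumes each bit on the layer where it is produced. Your explicit checks — that the shift map stays in $[0,1]$ including $x=1$, and that nonnegative accumulators can be relayed by identity-acting ReLUs — address bookkeeping the paper delegates to its figures.
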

\begin{proof}
The proof is composed of three parts. For any $x\in[0,1]$, we first use the multilayer neural network to approximate $x$ by its finite binary expansion $\sum_{i=0}^{n}\frac{x_{i}}{2^{i}}$. We then construct  a 2-layer neural network to implement function $f\left(\sum_{i=0}^{n}\frac{x_{i}}{2^{i}}\right)$. 

For each $x\in[0,1]$, $x$ can be denoted by its binary expansion $x=\sum_{i=0}^{\infty}\frac{x_{i}}{2^{i}},$ where $x_{i}\in\{0,1\}$ for all $i\ge0$. It is straightforward to see that the $n$-layer neural network shown in Figure~\ref{fig::binaryexpansion} can be used to find $x_{0},...,x_{n}$.

Next, we implement the function $\tilde{f}(x)=f\left(\sum_{i=0}^{n}\frac{x_{i}}{2^{i}}\right)$ by a two-layer neural network.
Since ${f(x)=x^{2}}$, we then rewrite $\tilde{f}(x)$ as follows:
\begin{align*}
\tilde{f}(x)&=\left(\sum_{i=0}^{n}\frac{x_{i}}{2^{i}}\right)^{2}=\sum_{i=0}^{n}\left[x_{i}\cdot\left(\frac{1}{2^{i}}\sum_{j=0}^{n}\frac{x_{j}}{2^{j}}\right)\right]=\sum_{i=0}^{n}\max\left(0,2(x_{i}-1)+\frac{1}{2^{i}}\sum_{j=0}^{n}\frac{x_{j}}{2^{j}}\right).
\end{align*}
The third equality follows from the fact that $x_{i}\in\{0,1\}$ for all $i$. Therefore, the function $\tilde{f}(x)$ can be implemented by a multilayer network containing a deep structure shown in Figure~\ref{fig::binaryexpansion} and another hidden layer with $n$ rectifier linear units. This multilayer neural network has $\mathcal{O}(n)$ layers, $\mathcal{O}(n)$ binary step units and $\mathcal{O}(n)$ rectifier linear units. 

Finally, we consider the approximation error of this multilayer neural network, \begin{align*}|f(x)&-\tilde{f}(x)|=\left|x^{2}-\left(\sum_{i=0}^{n}\frac{x_{i}}{2^{i}}\right)^{2}\right|\le2\left|x-\sum_{i=0}^{n}\frac{x_{i}}{2^{i}}\right|=2\left|\sum_{i=n+1}^{\infty}\frac{x_{i}}{2^{i}}\right|\le\frac{1}{2^{n-1}}.\end{align*}
%The first inequality follows from  $f'(x)\le2,\forall x\in[0,1]$.
Therefore, in order to achieve $\varepsilon$-approximation error, one should choose $n=\left\lceil\log_{2}\frac{1}{\varepsilon}\right\rceil+1$. In summary, the deep neural network has $\mathcal{O}\left(\log\frac{1}{\varepsilon}\right)$ layers, $\mathcal{O}\left(\log\frac{1}{\varepsilon}\right)$ binary step units and $\mathcal{O}\left(\log\left(\frac{1}{\varepsilon}\right)\right)$ rectifier linear units.
\end{proof}
Next, a theorem on the size of the network for approximating general polynomials is given as follows. 
\begin{theorem}\label{thm::polynomials}
For polynomials $f(x)=\sum_{i=0}^{p}a_{i}x^{i}$, ${x\in[0,1]}$ and $\sum_{i=1}^{p}|a_{i}|\le1$, there exists a multilayer neural network $\tilde{f}(x)$ with $\mathcal{O}\left(p+\log\frac{p}{\varepsilon}\right)$ layers, $\mathcal{O}\left(\log\frac{p}{\varepsilon}\right)$ binary step units and $\mathcal{O}\left(p\log\frac{p}{\varepsilon}\right)$ rectifier linear units such that $|f(x)-\tilde{f}(x)|\le\varepsilon$, $\forall x\in[0,1].$
\end{theorem}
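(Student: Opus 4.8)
The plan is to reuse the two ingredients already developed for Theorem~\ref{thm::quadratic}: the depth-$n$ binary-expansion subnetwork of Figure~\ref{fig::binaryexpansion}, which extracts the leading bits $x_0,\dots,x_n$ of the input, and the observation that for a bit $x_i\in\{0,1\}$ and any $y\in[0,1]$ the product $x_i\cdot y$ is computed \emph{exactly} by a single ReLU, namely $x_i\cdot y=\max\{0,\,2x_i-2+y\}$. Writing $\hat{x}=\sum_{i=0}^n x_i 2^{-i}$ for the truncated expansion, I would build the monomials $\hat{x}^2,\hat{x}^3,\dots,\hat{x}^p$ one at a time by the recursion
\[
\hat{x}^{k}=\hat{x}\cdot \hat{x}^{k-1}=\sum_{i=0}^{n}x_i\cdot\frac{\hat{x}^{k-1}}{2^{i}},
\]
in which every summand is a legitimate bit-times-$[0,1]$ product because the invariant $\hat{x}^{k-1}\in[0,1]$ holds and $2^{-i}\le 1$. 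Hence each new power costs one additional hidden layer built from $n+1$ ReLUs (with edge weight $2$ on $x_i$, weight $2^{-i}$ on $\hat{x}^{k-1}$, and bias $-2$), and the output layer simply forms the linear combination $\tilde{f}(x)=\sum_{i=0}^p a_i\hat{x}^{i}$.

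The book-keeping needed to land on the stated neuron counts is the part that requires care. The bits $x_0,\dots,x_n$ are produced by depth $n$ and must remain available to every subsequent power layer; since a bit passes unchanged through a ReLU, using $\max\{0,b\}=b$ for $b\in\{0,1\}$, I would simply carry them forward, spending $\mathcal{O}(n)$ ReLUs per layer across the $\mathcal{O}(p)$ power layers. This accounts for $\mathcal{O}(n)$ binary step units in the expansion stage, $n+\mathcal{O}(p)=\mathcal{O}(p+\log\frac{p}{\varepsilon})$ layers in total, and $\mathcal{O}(pn)=\mathcal{O}(p\log\frac{p}{\varepsilon})$ rectifier units, once $n$ is fixed by the error analysis below.

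Finally, I turn to the approximation error. Because every multiplication is represented \emph{exactly}, the network outputs $\sum_{i} a_i\hat{x}^i$ with no error beyond the truncation bound $|x-\hat{x}|\le 2^{-n}$. Combining $|x^i-\hat{x}^i|\le i\,|x-\hat{x}|$ for $x,\hat{x}\in[0,1]$ with the hypothesis $\sum_{i=1}^p|a_i|\le 1$ gives
\[
|f(x)-\tilde{f}(x)|\le\sum_{i=1}^p|a_i|\,|x^i-\hat{x}^i|\le 2^{-n}\sum_{i=1}^p i\,|a_i|\le p\,2^{-n},
\]
so choosing $n=\lceil\log_2(p/\varepsilon)\rceil$ drives the error below $\varepsilon$ and pins $n=\mathcal{O}(\log\frac{p}{\varepsilon})$, which closes the counts. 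The genuinely new idea relative to the quadratic case is the recursive construction of the higher powers while keeping each intermediate quantity inside $[0,1]$, and I expect the main obstacle to be verifying that this $[0,1]$ invariant is maintained at every step (so the single-ReLU multiplication stays valid) and that carrying the bits forward does not inflate the ReLU count beyond $\mathcal{O}(pn)$; the truncation estimate and the linear error propagation are then routine.
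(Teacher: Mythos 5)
Your proposal is correct and takes essentially the same route as the paper's own proof: the same binary-expansion front end from Figure~\ref{fig::binaryexpansion}, the same single-ReLU bit-multiplication identity, the identical power recursion $\hat{x}^{k}=\sum_{j=0}^{n}\max\bigl(0,\,2(x_j-1)+\hat{x}^{k-1}/2^{j}\bigr)$ (the paper's equation \eqref{eq::iteration}), and the same truncation-error analysis leading to $n=\mathcal{O}\bigl(\log\frac{p}{\varepsilon}\bigr)$. The only difference is that you spell out the $[0,1]$ invariant and the bit-forwarding bookkeeping, which the paper leaves implicit.
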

\begin{proof}
The proof is composed of  three parts. We first use the deep structure shown in Figure~\ref{fig::binaryexpansion} to find the $n$-bit binary expansion $\sum_{i=0}^{n}a_{i}x^{i}$ of $x$.   Then we construct a multilayer network to approximate polynomials $g_{i}(x)=x^{i}$, $i=1,...,p$. Finally, we analyze the approximation error.

Using the same deep structure  shown in Figure~\ref{fig::binaryexpansion}, we could find the binary expansion sequence $\{x_{0},...,x_{n}\}$. In this step, we used $n$ binary steps units in total. Now we rewrite   $g_{m+1}(\sum_{i=0}^{n}\frac{x_{i}}{2^{n}})$,
\begin{align}
g_{m+1}\left(\sum_{i=0}^{n}\frac{x_{i}}{2^{i}}\right)&=\sum_{j=0}^{n}\left[x_{j}\cdot\frac{1}{2^{j}}g_{m}\left(\sum_{i=0}^{n}\frac{x_{i}}{2^{i}}\right)\right]=\sum_{j=0}^{n}\max\left[2(x_{j}-1)+\frac{1}{2^j}g_{m}\left(\sum_{i=0}^{n}\frac{x_{i}}{2^{i}}\right),0\right].\label{eq::iteration}
\end{align}
Clearly, the equation \eqref{eq::iteration} defines iterations between the outputs of neighbor layers. 
Therefore,  the deep neural network shown in Figure~\ref{fig::quadratic} can be used  to implement the iteration given by \eqref{eq::iteration}. Further, to implement this network, one should use $\mathcal{O}(p)$ layers with $\mathcal{O}(pn)$ rectifier linear units in total.
\begin{figure*}[t]
\vspace{-0.5cm}
\centering
  \includegraphics[width=1\linewidth]{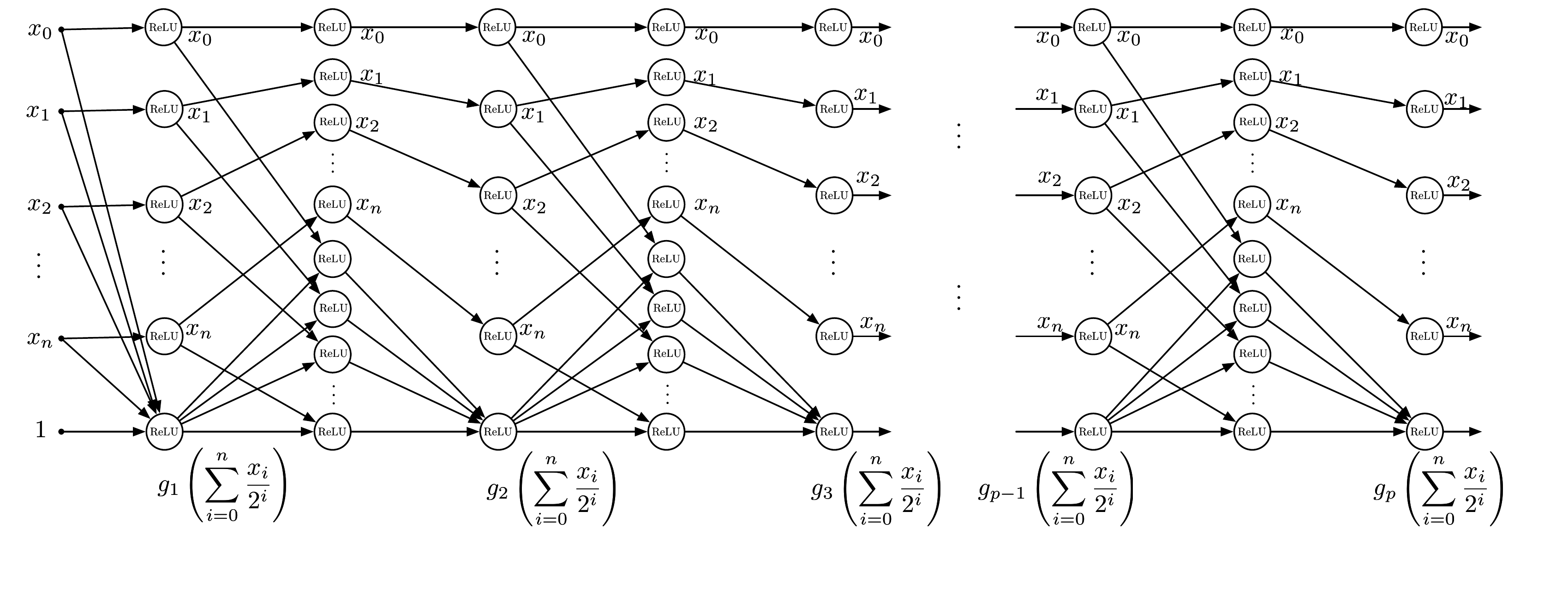}
\vspace{-1cm}
\caption{The implementation of polynomial function}
\label{fig::quadratic}
\vspace{-0.2cm}
\end{figure*}
We now define the output of the multilayer neural network as 
$\tilde{f}(x)=\sum_{i=0}^{p}a_{i}g_{i}\left(\sum_{j=0}^{n}\frac{x_{j}}{2^{j}}\right).$
For this multilayer network, the approximation error is  
\begin{align*}
|f(x)-\tilde{f}(x)|&=\left|\sum_{i=0}^{p}a_{i}g_{i}\left(\sum_{j=0}^{n}\frac{x_{j}}{2^{j}}\right)-\sum_{i=0}^{p}a_{i}x^{i}\right|\le \sum_{i=0}^{p}\left[|a_{i}|\cdot\left|g_{i}\left(\sum_{j=0}^{n}\frac{x_{j}}{2^{j}}\right)-x^{i}\right|\right]\le\frac{p}{2^{n-1}}
\end{align*}
This indicates, to achieve $\varepsilon$-approximation error, one should choose $n=\left\lceil\log\frac{p}{\varepsilon}\right\rceil+1$. Besides, since we used $\mathcal{O}(n+p)$ layers with $\mathcal{O}(n)$ binary step units and $\mathcal{O}(pn)$ rectifier linear units in total, this multilayer neural network thus has $\mathcal{O}\left(p+\log\frac{p}{\varepsilon}\right)$ layers, $\mathcal{O}\left(\log\frac{p}{\varepsilon}\right)$ binary step units and $\mathcal{O}\left(p\log\frac{p}{\varepsilon}\right)$ rectifier linear units.
\end{proof}
In Theorem~\ref{thm::polynomials}, we have shown an upper bound on the size of multilayer neural network for approximating polynomials. We can easily observe that the number of neurons in network grows as $p\log p$ with respect to $p$, the degree of the polynomial. We note that both \citet{andoni2014learning} and \citet{barron1993universal}  showed the sizes of the networks  grow exponentially with respect to $p$ if only 3-layer neural networks are allowed to be used in approximating polynomials. 

Besides, every function $f$ with $p+1$ continuous derivatives on a bounded set can be approximated easily with a polynomial with degree $p$. This is shown by the following well known result of Lagrangian interpolation. By this result, we could further generalize Theorem~\ref{thm::polynomials}. The proof can be found in the reference~\citep{gil2007numerical}.
\begin{lemma}[\textbf{Lagrangian interpolation at Chebyshev points}]\label{thm::interpolation}
If a function $f$ is defined at points $z_{0},...,z_{n}$, $z_{i}=\cos((i+1/2)\pi/(n+1))$, $i\in[n]$, there exists a polynomial of degree not more than $n$ such that
$P_{n}(z_{i})=f(z_{i})$, $i=0,...,n.$
This polynomial is given by 
$P_{n}(x)=\sum_{i=0}^{n}f(z_{i})L_{i}(x) $
where
$L_{i}(x)=\frac{\pi_{n+1}(x)}{(x-z_{i})\pi'_{n+1}(z_{i})}$ and $\pi_{n+1}(x)=\prod_{j=0}^{n}(x-z_{j}).$
Additionally, if $f$ is continuous on $[-1,1]$ and $n+1$ times differentiable in $(-1,1)$, then 
$$\left\|R_n\right\|=\left\|f-P_{n}\right\|\le\frac{1}{2^{n}(n+1)!}\left\|f^{(n+1)}\right\|,$$
where $f^{(n)}(x)$ is the derivative of $f$ of the $n$th order and the norm $\left\|f\right\|$ is the $l_{\infty}$ norm ${\left\|f\right\|=\max_{x\in [-1,1]}f(x)}$.
\end{lemma}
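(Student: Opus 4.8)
The plan is to treat the two assertions of the lemma separately: first the existence and the explicit form of the interpolant, and then the remainder bound, which I would obtain from the classical pointwise error formula combined with the extremal property of the Chebyshev polynomial.

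First I would verify that the displayed formula is nothing but the Lagrange cardinal basis. Writing $\pi_{n+1}(x)=\prod_{j=0}^{n}(x-z_{j})$, one has $\pi_{n+1}(x)/(x-z_{i})=\prod_{j\neq i}(x-z_{j})$ and $\pi'_{n+1}(z_{i})=\prod_{j\neq i}(z_{i}-z_{j})$, so each $L_{i}$ is a polynomial of degree $n$ satisfying $L_{i}(z_{k})=\delta_{ik}$. Consequently $P_{n}(x)=\sum_{i=0}^{n}f(z_{i})L_{i}(x)$ has degree at most $n$ and interpolates $f$ at every node. Uniqueness is immediate: two polynomials of degree at most $n$ agreeing at the $n+1$ distinct points $z_{0},\dots,z_{n}$ differ by a polynomial of degree at most $n$ with $n+1$ roots, hence by the zero polynomial.

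The substance is the error bound. Fix any $x\in[-1,1]$ that is not one of the nodes and introduce the auxiliary function
\begin{align*}
g(t)=f(t)-P_{n}(t)-\frac{f(x)-P_{n}(x)}{\pi_{n+1}(x)}\,\pi_{n+1}(t).
\end{align*}
By construction $g$ vanishes at the $n+2$ points $z_{0},\dots,z_{n},x$, so repeated application of Rolle's theorem produces a point $\xi\in(-1,1)$ with $g^{(n+1)}(\xi)=0$. Since $P_{n}$ has degree at most $n$ its $(n+1)$-th derivative vanishes, while $\pi_{n+1}^{(n+1)}\equiv(n+1)!$; solving $g^{(n+1)}(\xi)=0$ then yields the standard remainder $f(x)-P_{n}(x)=\frac{f^{(n+1)}(\xi)}{(n+1)!}\pi_{n+1}(x)$, and this holds trivially at the nodes. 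Taking absolute values and then the supremum over $[-1,1]$ gives $\|f-P_{n}\|\le\frac{\|f^{(n+1)}\|}{(n+1)!}\max_{x\in[-1,1]}|\pi_{n+1}(x)|$.

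The decisive step, and the part I expect to be the main obstacle, is estimating $\max_{x\in[-1,1]}|\pi_{n+1}(x)|$, since it is precisely here that the special choice of Chebyshev nodes enters. The nodes $z_{i}=\cos((i+1/2)\pi/(n+1))$ are exactly the roots of the Chebyshev polynomial $T_{n+1}(x)=\cos\bigl((n+1)\arccos x\bigr)$, whose leading coefficient is $2^{n}$. Because $\pi_{n+1}$ is the monic polynomial with these same $n+1$ roots, it must coincide with the monic rescaling, namely $\pi_{n+1}(x)=2^{-n}T_{n+1}(x)$. The bound $|T_{n+1}(x)|\le1$ on $[-1,1]$ (immediate from its cosine representation) therefore gives $\max_{x\in[-1,1]}|\pi_{n+1}(x)|=2^{-n}$. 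Substituting this into the previous inequality produces the stated factor $\tfrac{1}{2^{n}(n+1)!}$ and completes the argument; the only care needed is recalling the identification of the nodes with the Chebyshev roots and the value of the leading coefficient of $T_{n+1}$.
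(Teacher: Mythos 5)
Your proof is correct and complete: the cardinal-basis verification, the Rolle's-theorem remainder formula, and the identification $\pi_{n+1}=2^{-n}T_{n+1}$ (via the nodes being the roots of $T_{n+1}$ and its leading coefficient $2^{n}$) together give exactly the stated bound $\frac{1}{2^{n}(n+1)!}\left\|f^{(n+1)}\right\|$. Note that the paper itself does not prove this lemma at all --- it defers to the cited reference \citep{gil2007numerical} --- and your argument is precisely the classical one found there, so there is nothing to reconcile; you have simply supplied the standard proof the paper omits.
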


Then the upper bound on the network size for approximating more general functions  follows directly from Theorem~\ref{thm::polynomials} and Lemma~\ref{thm::interpolation}. 

\begin{theorem}\label{thm::anyfunction}
Assume that function $f$ is continuous on $[0,1]$ and $\left\lceil\log\frac{2}{\varepsilon}\right\rceil+1$ times differentiable in $(0,1)$. Let $f^{(n)}$ denote the derivative of $f$ of $n$th order and $\left\|f\right\|=\max_{x\in[0,1]}f(x)$. If  ${\left\|f^{(n)}\right\|}\le{n!}$ holds for all ${n\in\left[\left\lceil\log\frac{2}{\varepsilon}\right\rceil+1\right]}$, then there exists a deep neural network $\tilde{f}$ with $\mathcal{O}\left(\log\frac{1}{\varepsilon}\right)$ layers, $\mathcal{O}\left(\log\frac{1}{\varepsilon}\right)$ binary step units, $\mathcal{O}\left(\left(\log\frac{1}{\varepsilon}\right)^2\right)$ rectifier linear units such that
$\left\|f-\tilde{f}\right\|\le\varepsilon$.
 
\end{theorem}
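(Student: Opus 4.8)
The plan is to combine the two results already in hand: Lemma~\ref{thm::interpolation} lets me replace $f$ by a polynomial of low degree, and Theorem~\ref{thm::polynomials} lets me implement any such polynomial with a deep network of controlled size. I would split the error budget, allotting $\varepsilon/2$ to the polynomial approximation of $f$ and $\varepsilon/2$ to the network approximation of that polynomial, and then combine by the triangle inequality $\|f-\tilde f\|\le\|f-P_n\|+\|P_n-\tilde f\|$.

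First I would pass to the interval $[-1,1]$ on which Lemma~\ref{thm::interpolation} is stated, setting $g(t)=f((t+1)/2)$ so that $g^{(k)}(t)=2^{-k}f^{(k)}((t+1)/2)$ and hence $\|g^{(k)}\|\le 2^{-k}k!\le k!$; the hypothesis on the derivatives of $f$ is thus inherited by $g$. Choosing the degree $n=\lceil\log\frac{2}{\varepsilon}\rceil$ and applying Lemma~\ref{thm::interpolation} to $g$, the remainder bound together with $\|g^{(n+1)}\|\le(n+1)!$ gives $\|g-P_n\|\le\frac{1}{2^n(n+1)!}\|g^{(n+1)}\|\le 2^{-n}\le \varepsilon/2$. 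Undoing the change of variables, $P_n(2x-1)$ is a polynomial in $x$ of degree $n=\mathcal{O}(\log\frac1\varepsilon)$ that approximates $f$ on $[0,1]$ to within $\varepsilon/2$.

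It remains to realize this polynomial by a network using Theorem~\ref{thm::polynomials}, whose only restriction is the normalization $\sum_{i\ge1}|a_i|\le1$ on the monomial coefficients. I would therefore write $P_n(2x-1)=\sum_{i=0}^n c_i x^i$, set $C=\sum_{i\ge1}|c_i|$, and apply Theorem~\ref{thm::polynomials} to the normalized polynomial $P_n(2x-1)/C$ with target accuracy $\varepsilon/(2C)$; multiplying the network output by the constant $C$ folds into the output-layer weights and costs no additional neurons, while scaling the error back to $\varepsilon/2$. Since $P_n$ is bounded on $[-1,1]$ by $\|f\|+\varepsilon/2=\mathcal{O}(1)$, its degree-$n$ expansion has coefficients of size at most $\mathcal{O}(2^n)$, and the binomial factors from substituting $2x-1$ keep $C=\mathrm{poly}(1/\varepsilon)$; the decisive point is that this forces only $\log C=\mathcal{O}(\log\frac1\varepsilon)$. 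Plugging $p=n=\mathcal{O}(\log\frac1\varepsilon)$ and accuracy $\varepsilon/(2C)$ into the counts of Theorem~\ref{thm::polynomials} then yields $\mathcal{O}(p+\log\frac{pC}{\varepsilon})=\mathcal{O}(\log\frac1\varepsilon)$ layers, $\mathcal{O}(\log\frac{pC}{\varepsilon})=\mathcal{O}(\log\frac1\varepsilon)$ binary step units, and $\mathcal{O}(p\log\frac{pC}{\varepsilon})=\mathcal{O}((\log\frac1\varepsilon)^2)$ rectifier linear units.

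The step I expect to be the main obstacle is controlling the coefficient normalization $C$: monomial coefficients of a bounded polynomial can grow like $2^n$, so I must verify that $C$ stays polynomial in $1/\varepsilon$ (rather than, say, doubly exponential) so that $\log C$ is absorbed into $\mathcal{O}(\log\frac1\varepsilon)$ and the rectifier-unit count remains $\mathcal{O}((\log\frac1\varepsilon)^2)$. Everything else is a routine combination of the interpolation remainder bound with the size estimates of Theorem~\ref{thm::polynomials}.
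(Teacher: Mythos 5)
Your proposal is correct and rests on the same two ingredients as the paper's proof (Lemma~\ref{thm::interpolation} plus the binary-expansion polynomial network of Theorem~\ref{thm::polynomials}), but it combines them by a different decomposition, and the difference is instructive. The paper never invokes Theorem~\ref{thm::polynomials} as a black box: it sets $\tilde f(x)=P_N(\tilde x)$, where $\tilde x$ is the $(N+1)$-bit truncation of $x$, reuses the construction of Figure~\ref{fig::quadratic} to compute $P_N(\tilde x)$ \emph{exactly}, and splits the error as $|f(x)-f(\tilde x)|+|f(\tilde x)-P_N(\tilde x)|\le 2^{-N}+2^{-N}$; since the coefficients $c_n$ of $P_N$ enter only as output-layer weights of an exact linear combination $\sum_n c_n g_n(\tilde x)$, no bound on their size is ever needed. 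Your decomposition $\|f-Q\|+\|Q-\tilde f\|$ instead uses the \emph{statement} of Theorem~\ref{thm::polynomials}, whose hypothesis $\sum_{i\ge1}|a_i|\le1$ forces the normalization-and-rescaling step and the bound on $C=\sum_{i\ge1}|c_i|$ — an obstacle the paper's route simply never encounters. Your resolution of it is right in spirit but quantitatively off: by V.~A.~Markov's theorem the monomial coefficients of a polynomial bounded by $\mathcal{O}(1)$ on $[-1,1]$ can be as large as order $(1+\sqrt{2})^{n}$ (the Chebyshev polynomial is extremal), not $\mathcal{O}(2^n)$, and after substituting $2x-1$ the sum $C$ can reach roughly $\bigl(3(1+\sqrt{2})\bigr)^{n}$. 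This does not damage the argument — any geometric bound $c^{n}$ gives $\log C=\mathcal{O}\bigl(\log\frac{1}{\varepsilon}\bigr)$, which is all you use — but the claim as stated is false, so you should replace it with Markov's bound or bound the coefficients directly from the Lagrange basis functions. On the other side of the ledger, your proof is more careful than the paper's on one point: you perform the affine change of variables to $[-1,1]$ and verify that the derivative hypothesis is inherited, whereas the paper applies the interpolation lemma, stated on $[-1,1]$, to the interval $[0,1]$ without comment.
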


\begin{proof}
Let $N=\left\lceil\log\frac{2}{\varepsilon}\right\rceil$. From Lemma~\ref{thm::interpolation}, it follows that  there exists polynomial $P_{N}$ of degree $N$ such that for any $x\in[0,1]$, $$\left|f(x)-P_N(x)\right|\le\frac{\left\|f^{(N+1)}\right\|}{2^{N}(N+1)!}\le\frac{1}{2^{N}}.$$
Let $x_{0},...,x_{N}$ denote the first $N+1$ bits of the binary expansion of $x$ and define ${\tilde{f}(x)=P_{N}\left(\sum_{i=0}^{N}\frac{x_{i}}{2^{N}}\right)}.$
In the following, we first analyze the approximation error of $\tilde{f}$ and next show the implementation of this function. Let $\tilde{x}=\sum_{i=0}^{N}\frac{x_{i}}{2^{i}}$. The error can now be upper bounded by
\begin{align*}
|f(x)-\tilde{f}(x)|&=\left|f(x)-P_{N}\left(\tilde{x}\right)\right|\le \left|f(x)-f\left(\tilde{x}\right)\right|+\left|f\left(\tilde{x}\right)-P_{N}\left(\tilde{x}\right)\right|\\
&\le\left\|f^{(1)}\right\|\cdot\left|x-\sum_{i=0}^{N}\frac{x_{i}}{2^{i}}\right|+\frac{1}{2^{N}}\le \frac{1}{2^{N}}+\frac{1}{2^{N}}\le \varepsilon
\end{align*}
In the following, we describe the implementation of $\tilde{f}$ by a multilayer neural network. Since $P_N$ is a polynomial of degree $N$, function $\tilde{f}$ can be rewritten as  
$$\tilde{f}(x)=P_{N}\left(\sum_{i=0}^{N}\frac{x_{i}}{2^{i}}\right)=\sum_{n=0}^{N}c_{n}g_{n}\left(\sum_{i=0}^{N}\frac{x_{i}}{2^{i}}\right)$$
for some coefficients $c_{0},...,c_{N}$ and $g_{n}=x^{n}$, $n\in[N]$. Hence, the multilayer neural network shown in the Figure~\ref{fig::quadratic} can be used to implement $\tilde{f}(x)$. Notice that the network uses $\mathcal{O}(N)$ layers with $\mathcal{O}(N)$ binary step units in total to decode $x_0$,...,$x_N$ and $\mathcal{O}(N)$ layers with $\mathcal{O}(N^2)$ rectifier linear units in total to construct the polynomial $P_{N}$. Substituting $N=\left\lceil\log\frac{2}{\varepsilon}\right\rceil$, we have proved the theorem.
\end{proof}
\textbf{Remark: }Note that, to implement the architecture in Figure~\ref{fig::quadratic} using the definition of a feedforward neural network in Section~\ref{sec::preliminary}, we need the $g_i$, $i\in[p]$  at the output. This can be accomplished by using $\mathcal{O}(p^2)$ additional ReLUs. Since $p=\mathcal{O}(\log(1/\varepsilon))$, this doesn't change the order result in Theorem~\ref{thm::anyfunction}.

Theorem~\ref{thm::anyfunction} shows that any function $f$ with enough smoothness can be approximated by a multilayer neural network containing polylog$\left(\frac{1}{\varepsilon}\right)$ neurons with $\varepsilon$ error. Further, Theorem~\ref{thm::anyfunction} can be used to show that for functions $h_{1}$,...,$h_{k}$ with enough smoothness, then linear combinations, multiplications and compositions of these functions can as well be approximated by  multilayer neural networks containing polylog$\left(\frac{1}{\varepsilon}\right)$ neurons with $\varepsilon$ error. Specific results are given in the following corollaries.
%=============================================================================

\begin{corollary}[Function addition]\label{cor::functionadd} Suppose that all functions $h_{1},...,h_{k}$ satisfy the conditions in Theorem~\ref{thm::anyfunction}, and the vector $\bm{\beta}\in\{\bm{\omega}\in\mathbb{R}^{k}:\left\|\bm{\omega}\right\|_{1}=1\}$, then for the linear combination $f=\sum_{i=1}^{k}\beta_{i}h_{i}$, there exists a deep neural network $\tilde{f}$ with $\mathcal{O}\left(\log\frac{1}{\varepsilon}\right)$ layers, $\mathcal{O}\left(\log\frac{1}{\varepsilon}\right)$ binary step units, $\mathcal{O}\left(\left(\log\frac{1}{\varepsilon}\right)^2\right)$ rectifier linear units such that
$|f(x)-\tilde{f}|\le\varepsilon$, $\forall x\in[0,1].$
\end{corollary}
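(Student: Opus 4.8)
The plan is to avoid the naive strategy of building $k$ separate copies of the network from Theorem~\ref{thm::anyfunction} and summing their outputs: that would multiply the rectifier-unit count by a factor of $k$, whereas the claimed bound carries no dependence on $k$ at all. The key observation is that a linear combination of interpolating polynomials of a common degree is itself a \emph{single} polynomial of that same degree, so the entire problem collapses to approximating one polynomial, and the architecture of Figure~\ref{fig::quadratic} can be invoked exactly once.

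Concretely, I would first apply Lemma~\ref{thm::interpolation} to each $h_i$ separately, obtaining for each $i$ a polynomial $P_N^{(i)}$ of degree at most $N=\left\lceil\log\frac{2}{\varepsilon}\right\rceil$ that interpolates $h_i$ at the Chebyshev points. Because each $h_i$ satisfies the hypotheses of Theorem~\ref{thm::anyfunction}, the bound $\|h_i^{(n)}\|\le n!$ makes the remainder estimate collapse to $\|h_i-P_N^{(i)}\|\le \frac{\|h_i^{(N+1)}\|}{2^{N}(N+1)!}\le 2^{-N}$, exactly as in the proof of Theorem~\ref{thm::anyfunction}.

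Next I would define the single polynomial $P_N=\sum_{i=1}^{k}\beta_i P_N^{(i)}$, which has degree at most $N$. By the triangle inequality and $\|\bm{\beta}\|_1=1$, this gives $\|f-P_N\|\le \sum_{i=1}^{k}|\beta_i|\,\|h_i-P_N^{(i)}\|\le 2^{-N}$. I would also check that the Lipschitz constant of $f$ is controlled, since $\|f^{(1)}\|\le \sum_{i=1}^{k}|\beta_i|\,\|h_i^{(1)}\|\le \sum_{i=1}^{k}|\beta_i|=1$, so truncating the input $x$ to its $N$-bit expansion $\tilde{x}=\sum_{i=0}^{N}x_i/2^i$ costs at most $\|f^{(1)}\|\cdot|x-\tilde{x}|\le 2^{-N}$. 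Setting $\tilde{f}(x)=P_N(\tilde{x})$ and splitting $|f(x)-P_N(\tilde{x})|\le |f(x)-f(\tilde{x})|+|f(\tilde{x})-P_N(\tilde{x})|\le 2\cdot 2^{-N}\le \varepsilon$ then reproduces the error analysis of Theorem~\ref{thm::anyfunction} verbatim.

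Finally, for the implementation I would reuse the construction of Theorem~\ref{thm::polynomials}: the shared binary-expansion block decodes $x_0,\dots,x_N$ with $\mathcal{O}(N)$ binary step units, the monomial iteration~\eqref{eq::iteration} builds $g_0,\dots,g_N$ with $\mathcal{O}(N)$ layers and $\mathcal{O}(N^2)$ rectifier units, and $P_N$ is recovered as a single linear read-out of the $g_n$ in which the weights $\beta_i$ are absorbed and contribute no extra neurons. Substituting $N=\mathcal{O}\!\left(\log\frac1\varepsilon\right)$ yields the stated counts. The only genuine subtlety, and the step I would flag, is recognizing that the combination must be performed at the level of polynomials rather than networks; once that is seen, the derivative and norm bounds required to invoke the earlier error analysis survive the combination precisely because $\bm{\beta}$ lies on the $\ell_1$ unit sphere.
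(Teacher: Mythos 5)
Your proposal is correct and is essentially the paper's own argument: the paper's appendix proof likewise builds on the fact that each approximant from Theorem~\ref{thm::anyfunction} is a degree-$N$ polynomial read-out $\sum_{j}c_{ij}g_{j}(\tilde{x})$ over a shared binary-expansion/monomial architecture, so the $\beta$-weighted sum collapses to a single polynomial $\sum_j c'_j g_j(\tilde{x})$ implemented by one copy of the Figure~\ref{fig::quadratic} network, with $\|\bm{\beta}\|_1=1$ controlling the error exactly as in your triangle-inequality step. (The paper's remark also notes an even shorter route you did not take but which your derivative bound $\|f^{(n)}\|\le\sum_i|\beta_i|\,\|h_i^{(n)}\|\le n!$ essentially establishes: $f$ itself satisfies the hypotheses of Theorem~\ref{thm::anyfunction}, so that theorem applies directly.)
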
%\vspace{-0.2cm}
\textbf{Remark: }Clearly, Corollary~\ref{cor::functionadd} follows directly from the fact that the linear combination $f$  satisfies the conditions in Theorem~\ref{thm::anyfunction} if all the functions $h_{1}$,...,$h_{k}$ satisfy those conditions. We note here that the upper bound on the network size for approximating linear combinations is independent of $k$, the number of component functions.
%==========================================================================
\begin{corollary}[Function multiplication]\label{cor::functionmul} Suppose that all functions $h_{1}$,...,$h_{k}$ are continuous on $[0,1]$ and $\left\lceil4k\log_{2}4k+4k+2\log_{2}\frac{2}{\varepsilon}\right\rceil+1$ times differentiable in $(0,1)$. If ${\|h_{i}^{(n)}\|}\le{n!}$ holds for all ${i\in[k]}$ and $n\in\left[\left\lceil4k\log_{2}4k+4k+2\log_{2}\frac{2}{\varepsilon}\right\rceil+1\right]$ then for the multiplication $f=\prod_{i=1}^{k}h_{i}$, there exists a multilayer neural network $\tilde{f}$ with $\mathcal{O}\left(k\log k+\log\frac{1}{\varepsilon}\right)$ layers, $\mathcal{O}\left(k\log k+\log\frac{1}{\varepsilon}\right)$ binary step units and $\mathcal{O}\left((k\log k)^{2}+\left(\log\frac{1}{\varepsilon}\right)^{2}\right)$ rectifier linear units
such that 
$|f(x)-\tilde{f}(x)|\le \varepsilon$, $\forall {x\in[0,1]}.$
\end{corollary}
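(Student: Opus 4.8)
The plan is to imitate the proof of Theorem~\ref{thm::anyfunction}, but the product $f=\prod_{i=1}^{k}h_{i}$ need not satisfy the hypothesis $\|f^{(n)}\|\le n!$ of that theorem, so I cannot invoke it as a black box; instead I will return to Lemma~\ref{thm::interpolation} and choose the interpolation degree carefully. The first step is to control the higher derivatives of the product. By the general Leibniz rule,
\begin{align*}
f^{(n)}=\sum_{n_{1}+\cdots+n_{k}=n}\binom{n}{n_{1},\ldots,n_{k}}\prod_{i=1}^{k}h_{i}^{(n_{i})},
\end{align*}
and using $\|h_{i}^{(n_{i})}\|\le n_{i}!$ together with the identity $\binom{n}{n_{1},\ldots,n_{k}}\prod_{i}n_{i}!=n!$, every summand is bounded by $n!$. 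Since the number of nonnegative integer solutions of $n_{1}+\cdots+n_{k}=n$ is $\binom{n+k-1}{k-1}$, I obtain
\begin{align*}
\big\|f^{(n)}\big\|\le n!\binom{n+k-1}{k-1}.
\end{align*}
This is precisely the combinatorial blow-up that forbids a direct appeal to Theorem~\ref{thm::anyfunction}.

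Next I apply Lemma~\ref{thm::interpolation} to produce a polynomial $P_{N}$ of degree $N$ with $\|f-P_{N}\|\le \|f^{(N+1)}\|/(2^{N}(N+1)!)$. Substituting the derivative bound and canceling $(N+1)!$ gives $\|f-P_{N}\|\le \binom{N+k}{k-1}/2^{N}$. The role of the explicit degree $N=\lceil 4k\log_{2}4k+4k+2\log_{2}(2/\varepsilon)\rceil+1$ is exactly to force this right-hand side below $\varepsilon/2$: the factor $2^{N}$ must absorb both the $\log(1/\varepsilon)$ demanded by the target accuracy and the factor $\binom{N+k}{k-1}\le (N+k)^{k-1}$ coming from the number of functions, and a routine estimate of $(k-1)\log_{2}(N+k)$ against $N$ shows the stated choice suffices in both regimes ($N$ dominated by the $k$-term or by the $\log(1/\varepsilon)$-term). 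I then set $\tilde f(x)=P_{N}(\sum_{i=0}^{N}x_{i}2^{-i})$ using the first $N+1$ bits of the binary expansion of $x$.

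To finish the error analysis I split, exactly as in Theorem~\ref{thm::anyfunction},
\begin{align*}
|f(x)-\tilde f(x)|\le |f(x)-f(\tilde x)|+|f(\tilde x)-P_{N}(\tilde x)|,
\end{align*}
where $\tilde x=\sum_{i=0}^{N}x_{i}2^{-i}$. The second term is at most $\|f-P_{N}\|\le\varepsilon/2$. For the first term I use $\|f^{(1)}\|\le\sum_{i}\|h_{i}^{(1)}\|\prod_{j\ne i}\|h_{j}\|\le k$ (each factor being at most $1$), so $|f(x)-f(\tilde x)|\le k|x-\tilde x|\le k/2^{N}$, which is also $\le\varepsilon/2$ for the chosen $N$. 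The implementation then coincides with Figure~\ref{fig::quadratic}: $\mathcal{O}(N)$ layers and $\mathcal{O}(N)$ binary step units decode $x_{0},\ldots,x_{N}$, and $\mathcal{O}(N)$ layers with $\mathcal{O}(N^{2})$ rectifier linear units build the monomials $g_{n}$ so that $P_{N}=\sum_{n}c_{n}g_{n}$ is read off at the output. Substituting $N=\mathcal{O}(k\log k+\log(1/\varepsilon))$ yields the advertised $\mathcal{O}(k\log k+\log(1/\varepsilon))$ layers and binary step units and $\mathcal{O}((k\log k)^{2}+(\log(1/\varepsilon))^{2})$ rectifier linear units.

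The main obstacle is the second step. Because the derivatives of a product grow like $n!\binom{n+k-1}{k-1}$ rather than $n!$, the clean hypothesis of Theorem~\ref{thm::anyfunction} breaks down, and one must instead determine how large a polynomial degree $N$ is needed for $2^{N}$ to dominate the combinatorial factor $\binom{N+k}{k-1}$ while keeping $N$ only $\mathcal{O}(k\log k+\log(1/\varepsilon))$; verifying that the prescribed $N$ indeed achieves $\binom{N+k}{k-1}/2^{N}\le\varepsilon/2$ is the crux of the argument.
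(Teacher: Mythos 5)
Your proposal is correct and follows essentially the same route as the paper's own proof: the multinomial Leibniz bound $\|f^{(n)}\|\le n!\binom{n+k-1}{k-1}$, Chebyshev interpolation via Lemma~\ref{thm::interpolation}, verification that $N=\lceil 4k\log_{2}4k+4k+2\log_{2}(2/\varepsilon)\rceil+1$ beats the binomial factor, and the same binary-expansion error split with the Figure~\ref{fig::quadratic} implementation. The only difference is that the paper carries out the degree-verification step in full (via $\binom{N+k}{k-1}\le(e(N+k)/(k-1))^{k-1}$ and monotonicity of $x/\log_{2}x$), which you correctly identify as the crux but leave as a routine estimate.
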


%============================================================================

\begin{corollary}[Function composition]\label{cor::functioncom} Suppose that all functions $h_{1},...,h_{k}:[0,1]\rightarrow[0,1]$ satisfy the conditions in Theorem~\ref{thm::anyfunction}, then for the composition $f=h_{1}\circ h_{2}\circ...\circ h_{k}$, there exists a multilayer neural network  $\tilde{f}$ with $\mathcal{O}\left(k\log k\log\frac{1}{\varepsilon}+\log k\left(\log\frac{1}{\varepsilon}\right)^{2}\right)$ layers, $\mathcal{O}\left(k\log k\log\frac{1}{\varepsilon}+\log k\left(\log\frac{1}{\varepsilon}\right)^{2}\right)$ binary step units and $\mathcal{O}\left(k^{2}\left(\log\frac{1}{\varepsilon}\right)^{2}+\left(\log\frac{1}{\varepsilon}\right)^{4}\right)$ rectifier linear units such that 
$|f(x)-\tilde{f}(x)|\le \varepsilon$, $\forall x\in[0,1].$
\end{corollary}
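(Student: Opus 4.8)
The plan is to build $\tilde f$ by approximating each factor $h_i$ separately and then chaining the approximators, rather than trying to approximate the univariate function $f=h_1\circ\cdots\circ h_k$ in one shot: its high-order derivatives blow up under the chain rule, so Theorem~\ref{thm::anyfunction} cannot be applied to $f$ directly. Concretely, I would invoke Theorem~\ref{thm::anyfunction} to produce, for each $i\in[k]$, a deep network $\tilde h_i$ with $\|h_i-\tilde h_i\|\le\delta$ on $[0,1]$, where the per-factor accuracy $\delta$ is a free parameter fixed at the end. I would then form $\tilde f=\tilde h_1\circ C\circ\tilde h_2\circ C\circ\cdots\circ C\circ\tilde h_k$, where $C(y)=\min\{1,\max\{0,y\}\}$ is a clipping gadget that projects onto $[0,1]$. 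Since $C(y)=\max\{0,y\}-\max\{0,y-1\}$, it costs only two rectifier units and one extra layer, and it guarantees that every block $\tilde h_i$ receives an input inside $[0,1]$, the domain on which its approximation guarantee from Theorem~\ref{thm::anyfunction} is valid.

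The core of the argument is the error analysis, which I would run backwards through the composition. Writing $g_j=h_j\circ\cdots\circ h_k$ and $\tilde g_j=\tilde h_j\circ C\circ\cdots\circ C\circ\tilde h_k$, I set $\epsilon_j=\|g_j-\tilde g_j\|$. The key structural fact is that the hypothesis $\|h_i^{(1)}\|\le 1!$ makes every $h_i$ nonexpansive, so that $|h_j(u)-h_j(v)|\le|u-v|$. Combining this with the triangle inequality $|h_j(u)-\tilde h_j(v)|\le|h_j(u)-h_j(v)|+|h_j(v)-\tilde h_j(v)|$ and the fact that $C$ is a contraction onto $[0,1]$, which contains the true value $g_{j+1}(x)$, I would obtain the recursion $\epsilon_j\le\epsilon_{j+1}+\delta$, hence $\epsilon_1\le k\delta$. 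It is essential here to measure the perturbation through the Lipschitz constant of the true $h_j$ and never through $\tilde h_j$: the approximators contain the piecewise-constant bit-extraction map of Figure~\ref{fig::binaryexpansion} and so have unbounded local slope, and a naive bound would blow up multiplicatively across the $k$ stages. Setting $\delta=\varepsilon/k$ then yields $\|f-\tilde f\|\le\varepsilon$; note this requires the smoothness hypothesis of Theorem~\ref{thm::anyfunction} to hold at accuracy $\varepsilon/k$, i.e. differentiability and the factorial derivative bound up to order $\Theta(\log(k/\varepsilon))$.

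Finally I would tally the resources by summing over the $k$ blocks. At accuracy $\delta=\varepsilon/k$, Theorem~\ref{thm::anyfunction} gives each $\tilde h_i$ using $\mathcal{O}(\log(k/\varepsilon))$ layers, $\mathcal{O}(\log(k/\varepsilon))$ binary step units, and $\mathcal{O}((\log(k/\varepsilon))^2)$ rectifier units, and each clipping gadget adds $\mathcal{O}(1)$ of each. Expanding $\log(k/\varepsilon)=\mathcal{O}(\log k+\log\tfrac1\varepsilon)$ and summing the $k$ terms produces depth and unit counts that are polynomial in $k$ and polylogarithmic in $1/\varepsilon$, of the form asserted in the statement.

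The step I expect to be the main obstacle is exactly the error-propagation control of the second paragraph: making the composition of the discontinuous, bit-wise approximators behave, by routing every perturbation bound through the genuinely Lipschitz targets $h_j$ and by inserting the clipping maps so that each block stays within its valid input domain $[0,1]$. Once the clean additive recursion $\epsilon_j\le\epsilon_{j+1}+\delta$ is established, fixing $\delta=\varepsilon/k$ and carrying out the resource bookkeeping are routine.
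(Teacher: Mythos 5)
Your proof is correct, and it differs from the paper's in ways that matter. The paper also cascades per-function approximators from Theorem~\ref{thm::anyfunction} and, like you, drives the error analysis through the $1$-Lipschitzness of the \emph{true} functions (from $\|h_i^{(1)}\|\le 1$); but it runs a forward induction on $F_m=h_1\circ\cdots\circ h_m$ with a \emph{geometric} error split --- each level approximates $F_{m-1}$ and $h_m$ to accuracy $\varepsilon/3$, so after unrolling, the inner blocks must be built to accuracy of order $\varepsilon/3^{m}$ --- and it keeps intermediate values in the valid domain by rescaling, composing with $\tilde{h}_m/(1+\varepsilon/3)$; the bookkeeping is then done by solving recursions $T_i(m)$ for the per-level resource coefficients. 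You instead use a \emph{uniform} budget $\delta=\varepsilon/k$, a backward additive recursion $\epsilon_j\le\epsilon_{j+1}+\delta$, and a two-ReLU clipping gadget $C(y)=\max\{0,y\}-\max\{0,y-1\}$. Your route buys three things: (i) asymptotically smaller counts, $\mathcal{O}\left(k\log\frac{k}{\varepsilon}\right)$ layers and binary step units and $\mathcal{O}\left(k\left(\log\frac{k}{\varepsilon}\right)^2\right)$ ReLUs, which sit comfortably inside the corollary's stated $\mathcal{O}(\cdot)$ bounds; (ii) a weaker implicit smoothness requirement --- you need the factorial derivative bounds up to order $\Theta\left(\log\frac{k}{\varepsilon}\right)$, versus $\Theta\left(k+\log\frac{1}{\varepsilon}\right)$ for the paper's geometric budget (both proofs strengthen the corollary's literal hypothesis, which you flag explicitly and the paper does not); and (iii) cleaner domain control, since clipping projects onto $[0,1]$ from both sides, whereas the paper's rescaling only fixes the upper end (its $\tilde{h}_m$ may be slightly negative, a small gap in that argument). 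The paper's induction, for its part, avoids introducing any auxiliary gadget layers and yields the resource expressions exactly in the form quoted in the statement.
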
%\vspace{-0.2cm}
\textbf{Remark: }Proofs of Corollary~\ref{cor::functionmul} and \ref{cor::functioncom} can be found in the appendix. We observe that different from the case of linear combinations, the upper bound on the network size grows as $k^{2}\log^{2}k$ in the case of function multiplications and grows as $k^{2}\left(\log\frac{1}{\varepsilon}\right)^{2}$ in the case of function compositions where $k$ is the number of component functions.

In this subsection, we have shown a poly$\log\left(\frac{1}{\varepsilon}\right)$ upper bound on the network size for $\varepsilon$-approximation of both univariate polynomials and general univariate functions with enough smoothness. Besides, we have shown that linear  combinations, multiplications and compositions of univariate functions with enough smoothness can as well be approximated with $\varepsilon$ error by a multilayer neural network of size poly$\log\left(\frac{1}{\varepsilon}\right)$. In the next subsection, we will show the upper bound on the network size for approximating multivariate functions.

\subsection{Approximation of multivariate functions}\label{sec::multivariate}

In this subsection, we present all results on approximating multivariate functions. We first present a theorem on the upper bound on the neural network size for approximating a product of multivariate linear functions. We next present a theorem on the upper bound on the neural network size for approximating general multivariate polynomial functions. Finally, similar to the results in the univariate case, we present the upper bound on the neural network size for approximating the linear combination, the multiplication and the composition of multivariate functions with enough smoothness.

\begin{theorem}\label{thm::multinomial}
Let $W=\{\bm{w}\in\mathbb{R}^{d}:\left\|\bm{w}\right\|_{1}=1\}$. 
For  $f(\bm{x})=\prod_{i=1}^{p}\left( \bm{w}_{i}^{T}\bm{ x}\right)$, $\bm{x}\in[0,1]^{d}$ and $\bm{w}_{i}\in W$, $i=1,...,p$, there exists a deep neural network $\tilde{f}(\bm{x})$ with  $\mathcal{O}\left(p+\log\frac{pd}{\varepsilon}\right)$ layers and $\mathcal{O}\left(\log\frac{pd}{\varepsilon}\right)$ binary step units  and $\mathcal{O}\left(pd\log\frac{pd}{\varepsilon}\right)$ rectifier linear units such that 
$|f(\bm{x})-\tilde{f}(\bm{x})|\le\varepsilon$, $\forall \bm{x}\in[0,1]^{d}.$
\end{theorem}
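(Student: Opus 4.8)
The plan is to treat the product of linear forms as the multivariate analogue of the monomial-building iteration in Theorem~\ref{thm::polynomials}, and to reduce the whole construction to the two primitives the paper has already implemented cheaply: extracting binary digits with binary step units (Figure~\ref{fig::binaryexpansion}), and multiplying a stored real by a single bit with one rectifier linear unit. Concretely, I would write $f(\bm x)=\prod_{i=1}^p y_i$ with $y_i:=\bm w_i^T\bm x$, build each $y_i$, expand it into bits, and then accumulate the product by $P_1=y_1$, $P_{m+1}=y_{m+1}P_m$, reusing the identity~\eqref{eq::iteration} at every level.

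First I would reduce each form to a bounded scalar. Since $\bm x\in[0,1]^d$ and $\|\bm w_i\|_1=1$, each $y_i=\bm w_i^T\bm x$ satisfies $|y_i|\le 1$; writing $y_i=y_i^{+}-y_i^{-}$ with $y_i^{\pm}=\max(0,\pm y_i)\in[0,1]$ represents every (possibly negative) form by a pair of rectifier units in the input layer, so signs propagate as differences in later linear combinations. I would then binary-expand the nonnegative parts to $n$ bits via the circuit of Figure~\ref{fig::binaryexpansion} and assemble the product level by level. The core step is exactly~\eqref{eq::iteration}: writing $y_{m+1}=\sum_{j=0}^{n}b_j2^{-j}$ with $b_j\in\{0,1\}$ and splitting a signed accumulator into its positive and negative parts (each in $[0,1]$), I have $y_{m+1}P_m=\sum_{j=0}^{n}b_j\,2^{-j}P_m=\sum_{j=0}^{n}\max\!\left(0,\,2(b_j-1)+2^{-j}P_m\right)$, which is valid because each argument $2^{-j}P_m$ lies in the admissible range $[0,1]\subseteq[0,2]$ after the split. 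Each multiplication level then costs $\mathcal{O}(n)$ rectifier units and no new binary step units beyond those that produced the digits, and the product uses $\mathcal{O}(p)$ levels; the factor $pd$ in the rectifier count reflects handling $p$ forms each assembled from $d$ coordinates, while the layer count comes out as $\mathcal{O}(p+n)$.

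Finally I would control the error by a telescoping bound. Replacing each $y_i$ by its $n$-bit truncation $\tilde y_i$ gives per-factor error $|y_i-\tilde y_i|\le 2^{-n}$ (using $\|\bm w_i\|_1=1$ and the coordinate truncation bound), and since every factor lies in $[-1,1]$ the standard product estimate yields $\big|\prod_i y_i-\prod_i\tilde y_i\big|\le p\,2^{-n}$; choosing $n=\Theta(\log\frac{pd}{\varepsilon})$ drives this below $\varepsilon$ and fixes all three neuron counts. The main obstacle I anticipate is not the error bound but keeping the binary step count at $\mathcal{O}(\log\frac{pd}{\varepsilon})$: a naive implementation expands each of the $p$ distinct forms separately and spends $\mathcal{O}(p\log\frac{pd}{\varepsilon})$ binary step units, so the delicate part is arranging the digit extraction and coordinate bookkeeping to be shared across the $p$ accumulation levels, while simultaneously guaranteeing that the rectifier multiplication identity is only ever evaluated on arguments in $[0,2]$ once the sign splits are in place.
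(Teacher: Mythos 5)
Your multiplication scheme, the positive/negative splitting, and the telescoping error bound are all sound, but your route is genuinely different from the paper's, and the obstacle you flag at the end is precisely where it falls short of the stated bounds. The paper never binary-expands the form values $y_i=\bm{w}_i^{T}\bm{x}$ at all: it expands the $d$ input coordinates $x^{(k)}$ once (costing $dn$ binary step units, $n=\Theta(\log\frac{pd}{\varepsilon})$) and runs the accumulation as
\begin{align*}
g_{l+1}(\tilde{\bm{x}})=\sum_{k=1}^{d}w_{(l+1)k}\sum_{r=0}^{n}\max\left(0,\,2\bigl(x_{r}^{(k)}-1\bigr)+\frac{g_{l}(\tilde{\bm{x}})}{2^{r}}\right),
\end{align*}
i.e.\ the weights $w_{(l+1)k}$ are folded into the outer linear combination so that the ReLU--bit trick is only ever applied to \emph{coordinate} bits, which are the same at every one of the $p$ levels. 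That sharing is impossible in your decomposition: each level needs the digits of a different number $y_{m+1}^{\pm}$, so the $\Theta(p\log\frac{pd}{\varepsilon})$ binary-step count is intrinsic to your construction, not an artifact of naive bookkeeping. (To be fair, the paper's own construction uses $dn=\mathcal{O}(d\log\frac{pd}{\varepsilon})$ binary step units, so the factor-free count in the theorem statement is not achieved by either proof; but the coordinate-wise expansion is what matches the stated $\mathcal{O}(pd\log\frac{pd}{\varepsilon})$ ReLU count, paying $dn$ ReLUs per level where you pay $\mathcal{O}(n)$.) A second counting issue: the digits extracted up front must be carried forward to the level at which they are consumed; carrying the $2pn$ form bits across up to $p$ levels costs $\mathcal{O}(p^{2}n)$ pass-through units in your scheme, which exceeds the theorem's ReLU budget once $p\gg d$, whereas carrying the $dn$ coordinate bits costs $\mathcal{O}(pdn)$ and is absorbed in the paper's count.

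One point cuts in your favor, and it is worth recording: your sign-splitting is not optional hygiene but repairs a step the paper glosses over. Since the $\bm{w}_i$ may have negative entries, the paper's accumulator $g_{l}(\tilde{\bm{x}})=\prod_{i\le l}\bm{w}_i^{T}\tilde{\bm{x}}$ can be negative, and the identity $b\cdot z=\max(0,2(b-1)+z)$ for a bit $b$ is false when $z<0$ (take $b=1$, $z<0$); the paper applies it to $z=g_{l}/2^{r}$ with only the upper bound $g_{l}\le 1$ in hand. Your four-product decomposition $y^{\pm}P^{\pm}$, together with the invariant $P_{m}^{+}+P_{m}^{-}\le 1$ (which follows inductively from $y^{+}_{m}+y^{-}_{m}=|y_m|\le\|\bm{w}_m\|_1=1$), keeps every argument of the ReLU identity in $[0,1]$ and is exactly the fix the paper's iteration needs for signed weights. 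So the right synthesis is: keep the paper's coordinate-wise expansion and weight folding to get the binary-step and layer counts, and graft on your positive/negative splitting of the accumulator to make the iteration valid.
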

Theorem~\ref{thm::multinomial} shows an upper bound on the network size for $\varepsilon$-approximation of a product of multivariate linear functions. Furthermore, since any general multivariate polynomial can be viewed as a linear combination of products, the result on general multivariate polynomials directly follows from Theorem~\ref{thm::multinomial}. 

\begin{theorem}\label{thm::generalmultinomial}Let  the multi-index vector $\bm{\alpha}=(\alpha_{1},...,\alpha_{d})$, the norm $|\bm{\alpha}|=\alpha_{1}+...+\alpha_{d}$, the coefficient $C_{\bm{\alpha}}=C_{\alpha_{1}...\alpha_{d}}$, the input vector $\bm{x}=(x^{(1)},...,x^{(d)})$ and the multinomial $\bm{x}^{\bm{\alpha}}={x^{(1)}}^{\alpha_{1}}...{x^{(d)}}^{\alpha_{d}}$. For positive integer $p$ and polynomial $f(\bm{x})=\sum_{\bm{\alpha}:|\bm{\alpha}|\le p}C_{\bm{\alpha}}\bm{x}^{\bm{\alpha}}$, $\bm{x}\in[0,1]^{d}$ and $\sum_{\bm{\alpha}:|\bm{\alpha}|\le p}|C_{\bm{\alpha}}|\le 1$, there exists a deep neural network $\tilde{f}(\bm{x})$ of depth $\mathcal{O}\left(p+\log\frac{dp}{\varepsilon}\right)$ and size $N(d,p,\varepsilon)$ such that 
$|f(\bm{x})-f(\tilde{\bm{x}})|\le \varepsilon,$
where \vspace{-0.2cm}
$$N(d,p,\varepsilon)=p^{2}\left(\begin{matrix}p+d-1\\d-1\end{matrix}\right)\log\frac{pd}{\varepsilon}.$$
%$$N(d,p,\varepsilon)=\left\{\begin{matrix}\mathcal{O}\left(\left(ep\right)^{d+1}\log\frac{pd}{\varepsilon}\right)&\text{if }d\text{ is fixed}, p\rightarrow \infty,\\\mathcal{O}\left(p^{2}\left(ed\right)^{p}\log\frac{pd}{\varepsilon}\right)&\text{if }p\text{ is fixed}, d\rightarrow \infty.\end{matrix}\right.$$
\end{theorem}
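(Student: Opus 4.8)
The plan is to exploit the observation made just before the statement: a multivariate polynomial of total degree at most $p$ is a linear combination of monomials $\bm{x}^{\bm{\alpha}}$, and each monomial is itself a product of linear functions to which Theorem~\ref{thm::multinomial} applies directly. I would therefore reduce the problem to Theorem~\ref{thm::multinomial} combined with a linear-combination argument of exactly the kind used in Corollary~\ref{cor::functionadd}.

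First I would write $f(\bm{x})=\sum_{\bm{\alpha}:|\bm{\alpha}|\le p}C_{\bm{\alpha}}\bm{x}^{\bm{\alpha}}$ and note that each monomial $\bm{x}^{\bm{\alpha}}$ with $|\bm{\alpha}|=m\le p$ equals $\prod_{i=1}^{m}(\bm{w}_i^{T}\bm{x})$, where every $\bm{w}_i$ is a standard basis vector (coordinate $k$ repeated $\alpha_k$ times); each such $\bm{w}_i$ satisfies $\|\bm{w}_i\|_1=1$, hence lies in $W$. Thus a monomial is precisely the object handled by Theorem~\ref{thm::multinomial} with its degree parameter replaced by $m\le p$. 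I would build, for each $\bm{\alpha}$, a subnetwork $\tilde{g}_{\bm{\alpha}}$ approximating $\bm{x}^{\bm{\alpha}}$ to error $\varepsilon$, reusing the single binary-expansion front-end of Figure~\ref{fig::binaryexpansion} (which decodes all $d$ coordinates once) across every branch. Because each factor is a single decoded coordinate rather than a general linear combination, the inner sum in the Horner-type iteration collapses to one term, so each monomial costs $\mathcal{O}(pn)$ rectifier linear units and $\mathcal{O}(p+n)$ layers with $n=\mathcal{O}(\log\frac{pd}{\varepsilon})$, as in Theorem~\ref{thm::multinomial}.

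Next I would set $\tilde{f}=\sum_{\bm{\alpha}}C_{\bm{\alpha}}\tilde{g}_{\bm{\alpha}}$, running all monomial subnetworks in parallel on top of the shared front-end. The error bound is the easy part: by the triangle inequality and the hypothesis $\sum_{\bm{\alpha}}|C_{\bm{\alpha}}|\le 1$,
$$|f(\bm{x})-\tilde{f}(\bm{x})|\le\sum_{\bm{\alpha}}|C_{\bm{\alpha}}|\,\bigl|\bm{x}^{\bm{\alpha}}-\tilde{g}_{\bm{\alpha}}(\bm{x})\bigr|\le\Bigl(\max_{\bm{\alpha}}\bigl|\bm{x}^{\bm{\alpha}}-\tilde{g}_{\bm{\alpha}}(\bm{x})\bigr|\Bigr)\sum_{\bm{\alpha}}|C_{\bm{\alpha}}|\le\varepsilon.$$
The crucial point here is that the $\ell_1$ constraint on the coefficients lets each monomial be approximated only to accuracy $\varepsilon$, rather than $\varepsilon$ divided by the number of monomials, so the binary-expansion length stays $n=\mathcal{O}(\log\frac{pd}{\varepsilon})$ exactly as in Theorem~\ref{thm::multinomial}.

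Finally I would count. The number of monomials of total degree at most $p$ is $\sum_{m=0}^{p}\binom{m+d-1}{d-1}\le(p+1)\binom{p+d-1}{d-1}$. Multiplying the per-monomial count of $\mathcal{O}(pn)$ rectifier linear units by this bound gives a total size of order $p\cdot\binom{p+d-1}{d-1}\cdot pn=p^{2}\binom{p+d-1}{d-1}\log\frac{pd}{\varepsilon}=N(d,p,\varepsilon)$, and the depth is the $\mathcal{O}(n)$ front-end plus the $\mathcal{O}(p)$ depth of the deepest monomial branch, i.e.\ $\mathcal{O}(p+\log\frac{dp}{\varepsilon})$. I expect the main difficulty to be organizational rather than analytic: I must ensure the binary decoding of the $d$ coordinates is shared across all monomial branches and that the branches are laid out in parallel, so that neither the depth nor the binary-step count is inflated by the number of monomials, and I must verify that each partial product stays in $[0,1]$ so that the ReLU multiplication identity underlying Theorem~\ref{thm::multinomial} remains valid throughout.
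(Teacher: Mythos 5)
Your proposal is correct and follows essentially the same route as the paper's own proof: decompose $f$ into at most $p\binom{p+d-1}{d-1}$ monomials, approximate each monomial via the product-of-linear-forms construction of Theorem~\ref{thm::multinomial} (with basis-vector weights), form the linear combination $\sum_{\bm{\alpha}}C_{\bm{\alpha}}\tilde{g}_{\bm{\alpha}}$, and invoke $\sum_{\bm{\alpha}}|C_{\bm{\alpha}}|\le 1$ so that per-monomial accuracy $\varepsilon$ suffices. If anything, you are more explicit than the paper on two points it glosses over: sharing the binary-expansion front-end across all monomial branches (needed so the depth and binary-step count are not inflated by the number of monomials), and the collapse of the inner sum when each weight vector is a standard basis vector, which is why a monomial costs only $\mathcal{O}\bigl(p\log\frac{pd}{\varepsilon}\bigr)$ rather than $\mathcal{O}\bigl(pd\log\frac{pd}{\varepsilon}\bigr)$ rectifier linear units.
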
\vspace{-0.3cm}
\textbf{Remark:} The proof is given in the appendix. By further analyzing the results on the network size, we obtain the following results: (a) fixing degree $p$, $N(d,\varepsilon)=\mathcal{O}\left(d^{p+1}\log\frac{d}{\varepsilon}\right)$ as $d\rightarrow \infty$ and (b) fixing input dimension $d$, ${N(p,\varepsilon)=\mathcal{O}\left(p^{d}\log\frac{p}{\varepsilon}\right)}$ as $p\rightarrow \infty$. Similar results on approximating multivariate polynomials were obtained by \citet{andoni2014learning} and \citet{barron1993universal}. \citet{barron1993universal} showed that on can use a 3-layer neural network to approximate any multivariate polynomial with degree $p$, dimension $d$ and network size $d^{p}/\varepsilon^{2}$. \citet{andoni2014learning} showed that one could use the gradient descent to train a 3-layer neural network of size $d^{2p}/\varepsilon^{2}$  to approximate any multivariate polynomial. However, Theorem~\ref{thm::generalmultinomial} shows that the deep neural network could reduce the network size from $\mathcal{O}\left(1/\varepsilon\right)$ to $\mathcal{O}\left(\log\frac{1}{\varepsilon}\right)$ for the same $\varepsilon$ error. Besides, for a fixed input dimension $d$, the size of the 3-layer neural network used by \citet{andoni2014learning} and \citet{barron1993universal} grows exponentially with respect to the  degree $p$. However, the size of the deep neural network shown in Theorem~\ref{thm::generalmultinomial} grows only polynomially with respect to the degree. Therefore, the deep neural network could reduce the network size from $\mathcal{O}(\exp(p))$ to $\mathcal{O}(\text{poly}(p))$ when the degree $p$ becomes large. 

Theorem~\ref{thm::generalmultinomial} shows an upper bound on the network size for approximating multivariate polynomials. Further, by combining Theorem~\ref{thm::anyfunction} and Corollary~\ref{cor::functioncom}, we could obtain an upper bound on the network size for approximating more general functions. The results are shown in the following corollary. 

\begin{corollary}\label{cor::multivariatemul} Assume that all univariate functions $h_{1},...,h_{k}:[0,1]\rightarrow[0,1]$, $k\ge 1$, satisfy the conditions in Theorem~\ref{thm::anyfunction}. Assume that the multivariate polynomial ${l(\bm{x}):[0,1]^{d}\rightarrow[0,1]}$ is of degree $p$. For composition ${f=h_{1}\circ h_{2}\circ...\circ h_{k}\circ l(\bm{x})}$, there exists a multilayer neural network $\tilde{f}$ of depth $\mathcal{O}\left(p+\log d+k\log k\log\frac{1}{\varepsilon}+\log k\left(\log\frac{1}{\varepsilon}\right)^{2}\right)$ and of size $N(k,p,d,\varepsilon)$ such that 
$|\tilde{f}(\bm{x})-f(\bm{x})|\le \varepsilon$ for $\forall x\in[0,1]^{d},$
where \vspace{-0.6cm} \begin{align*}N(k,p,d,\varepsilon)=\mathcal{O}&\left(p^{2}\left(\begin{matrix}p+d-1\\d-1\end{matrix}\right)\log\frac{pd}{\varepsilon}+k^{2}\left(\log\frac{1}{\varepsilon}\right)^{2}+\left(\log\frac{1}{\varepsilon}\right)^{4}\right).\end{align*}
\end{corollary}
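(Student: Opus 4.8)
The plan is to factor the composition as $f = g \circ l$, where $g = h_1 \circ h_2 \circ \cdots \circ h_k : [0,1] \to [0,1]$ is a purely univariate composition and $l : [0,1]^d \to [0,1]$ is the given degree-$p$ multivariate polynomial. I would construct two sub-networks and cascade them: one that approximates the inner polynomial $l$ and one that approximates the outer univariate composition $g$, feeding the scalar output of the first into the input of the second. Concretely, I would invoke Theorem~\ref{thm::generalmultinomial} to obtain a sub-network $\tilde{l}(\bm{x})$ with $|l(\bm{x})-\tilde{l}(\bm{x})|\le\varepsilon_1$ using $\mathcal{O}\left(p+\log\frac{dp}{\varepsilon}\right)$ layers and $p^{2}\binom{p+d-1}{d-1}\log\frac{pd}{\varepsilon}$ neurons, and invoke Corollary~\ref{cor::functioncom} (whose hypotheses coincide with those assumed here on $h_1,\dots,h_k$) to obtain a sub-network $\tilde{g}$ with $|g(y)-\tilde{g}(y)|\le\varepsilon_2$ uniformly for $y\in[0,1]$, using $\mathcal{O}\left(k\log k\log\frac{1}{\varepsilon}+\log k\left(\log\frac{1}{\varepsilon}\right)^{2}\right)$ layers and $\mathcal{O}\left(k^{2}(\log\frac{1}{\varepsilon})^{2}+(\log\frac{1}{\varepsilon})^{4}\right)$ neurons. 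I then set $\tilde{f}(\bm{x})=\tilde{g}(\tilde{l}(\bm{x}))$.

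For the error analysis, the key observation is that the hypothesis $\|h_i^{(1)}\|\le 1!=1$ from Theorem~\ref{thm::anyfunction} makes each $h_i$, and hence the composition $g$, a $1$-Lipschitz map on $[0,1]$. A single triangle inequality then splits the error into an inner and an outer contribution:
\begin{align*}
|f(\bm{x})-\tilde{f}(\bm{x})| &\le |g(l(\bm{x}))-g(\tilde{l}(\bm{x}))| + |g(\tilde{l}(\bm{x}))-\tilde{g}(\tilde{l}(\bm{x}))| \\
&\le |l(\bm{x})-\tilde{l}(\bm{x})| + \varepsilon_2 \le \varepsilon_1 + \varepsilon_2 .
\end{align*}
Choosing $\varepsilon_1=\varepsilon_2=\varepsilon/2$ yields $|f-\tilde{f}|\le\varepsilon$ without altering the order of any bound. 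Adding the depths and sizes of the two sub-networks and absorbing the lower-order terms from the polynomial stage (an additive $\log p$ into $p$, and an additive $\log\frac{1}{\varepsilon}$ into the composition terms) reproduces exactly the claimed depth $\mathcal{O}\left(p+\log d+k\log k\log\frac{1}{\varepsilon}+\log k(\log\frac{1}{\varepsilon})^{2}\right)$ and the claimed size $N(k,p,d,\varepsilon)$.

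The step I expect to be the main obstacle is the domain-consistency issue at the interface between the two sub-networks: Corollary~\ref{cor::functioncom} guarantees accuracy of $\tilde{g}$ only on $[0,1]$, whereas the approximation $\tilde{l}(\bm{x})$ can land slightly outside $[0,1]$ even though $l(\bm{x})\in[0,1]$. I would resolve this by clamping the intermediate output to $[0,1]$ with a constant number of extra ReLUs, replacing $\tilde{l}(\bm{x})$ by $\min(\max(\tilde{l}(\bm{x}),0),1)$; since $l(\bm{x})\in[0,1]$, clamping can only decrease the distance to $l(\bm{x})$, so the bound $|l(\bm{x})-\tilde{l}(\bm{x})|\le\varepsilon_1$ is preserved, and the $\mathcal{O}(1)$ additional units do not affect the order of the depth or size. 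With the input to $\tilde{g}$ thus confined to $[0,1]$, the bound on the outer term is immediate from the uniform guarantee of Corollary~\ref{cor::functioncom}, and the argument is complete.
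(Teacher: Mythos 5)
Your proposal is correct and takes essentially the same route the paper intends: the paper gives no explicit proof of this corollary, presenting it as the direct combination of Theorem~\ref{thm::generalmultinomial} (for the inner polynomial $l$) and Corollary~\ref{cor::functioncom} (for the outer univariate composition $h_{1}\circ\cdots\circ h_{k}$), cascaded with a Lipschitz-plus-triangle-inequality error split exactly as you do. Your clamping of the intermediate output to $[0,1]$ handles an interface issue the paper leaves implicit (its proof of Corollary~\ref{cor::functioncom} deals with the analogous issue by rescaling), and it is sound since projection onto $[0,1]$ cannot increase the distance to $l(\bm{x})\in[0,1]$.
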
  

\textbf{Remark}: Corollary~\ref{cor::multivariatemul} shows an upper bound on network size for approximating compositions of multivariate polynomials and general univariate functions. The upper bound can be loose due to the assumption that  $l(\bm{x})$ is a general multivariate polynomials of degree $p$. For some specific cases, the upper bound can be much smaller. We present two specific examples in the Appendix~\ref{appendix::gaussion} and~\ref{appendix::ridge}. 

In this subsection, we have shown that a similar poly$\log\left(\frac{1}{\varepsilon}\right)$ upper bound on the network size for $\varepsilon$-approximation of general multivariate polynomials and functions which are  compositions of univariate functions and multivariate polynomials.

The results in this section can be used to find a multilayer neural network of size {polylog$\left(\frac{1}{\varepsilon}\right)$} which provides an approximation error of at most $\varepsilon$.  In the next section, we will present lower bounds on the network size for approximating both univariate and multivariate functions. The lower bound together with the upper bound  shows a tight bound on the network size required for function approximations. 

While we have presented results in both the univariate and multivariate cases for smooth functions, the results automatically extend to functions that are piecewise smooth, with a finite number of pieces. In other words, if the domain of the function is partitioned into regions, and the function is sufficiently smooth (in the sense described in the Theorems and Corollaries earlier) in each of the regions, then the results essentially remain unchanged except for an additional factor which will depend on the number of regions in the domain.
\vspace{-0.3cm}
\section{Lower bounds on function approximations}\label{sec::lowerbound}\vspace{-0.3cm}
In this section, we present lower bounds on the network size in function for certain classes of functions. Next, by combining the lower bounds and the upper bounds shown in the previous section, we could analytically show the advantages of deeper neural networks over shallower ones. The theorem below is inspired by a similar result  \citep{dasgupta1993power}  for univariate quadratic functions, where it is stated without a proof. Here we show that the result extends to general multivariate strongly convex functions. 

\begin{theorem}\label{thm::lowerbounduni}
Assume  function ${f:[0,1]^{d}\rightarrow \mathbb{R}}$ is differentiable and strongly convex with parameter $\mu$. Assume the multilayer neural network $\tilde{f}$ is composed of rectifier linear units and binary step units.  If  $|f(\bm{x})-\tilde{f}(\bm{x})|\le \varepsilon$, $\forall \bm{x}\in[0,1]^d,$
 then  the network size 
$N\ge\log_{2}\left(\frac{\mu}{16\varepsilon}\right).$
\end{theorem}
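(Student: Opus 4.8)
Proof proposal for Theorem~\ref{thm::lowerbounduni}.

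The plan is to reduce the multivariate statement to a one-dimensional counting argument and then play two facts against each other: a \emph{structural} upper bound on how many affine pieces the network output can have, and an \emph{analytic} lower bound, coming from strong convexity, on how many affine pieces are needed for an $\varepsilon$-approximation. Concretely, I would fix a segment $\gamma(t)=(t,c_2,\dots,c_d)$ for $t\in[0,1]$ with the last $d-1$ coordinates held fixed, and study $g(t):=f(\gamma(t))$ and $\tilde g(t):=\tilde f(\gamma(t))$. Since $\gamma$ moves at unit speed along a single coordinate axis, the definition of $\mu$-strong convexity restricts directly to the line, giving $g(\lambda s+(1-\lambda)t)\le \lambda g(s)+(1-\lambda)g(t)-\frac{\mu}{2}\lambda(1-\lambda)(s-t)^2$, so $g$ is $\mu$-strongly convex on $[0,1]$, and the hypothesis gives $|g(t)-\tilde g(t)|\le\varepsilon$ for all $t$.

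For the structural bound I would argue that $\tilde f$ is piecewise affine with few pieces. Assign to each of the $N$ neurons a binary state: for a ReLU, whether its pre-activation is nonnegative; for a BSU, likewise. Once all $N$ states are fixed, every ReLU acts as either the identity or the zero map on its input and every BSU outputs a constant, so the composed output $\tilde f$ is an affine function of $\bm{x}$ on the region of input space where that state vector is realized. There are at most $2^N$ state vectors, and each region is an intersection of half-spaces (hence convex), so the segment $\gamma$ meets each region in at most one interval. Consequently $\tilde g$ is piecewise affine in $t$ with at most $2^N$ pieces; discontinuities introduced by the BSUs only create additional breakpoints and therefore do not hurt this count.

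For the analytic bound, consider any maximal interval $[a,b]$ on which $\tilde g$ is a single affine function $\ell$, and write $h=b-a$ and $m=(a+b)/2$. Strong convexity at $\lambda=1/2$ gives $g(m)\le \frac{g(a)+g(b)}{2}-\frac{\mu}{8}h^2$, while $\ell(m)=\frac{\ell(a)+\ell(b)}{2}$ exactly. Subtracting and using $|g-\ell|\le\varepsilon$ at $a,m,b$ yields $-\varepsilon\le\varepsilon-\frac{\mu}{8}h^2$, i.e. $h\le\sqrt{16\varepsilon/\mu}$. Since the pieces tile $[0,1]$, their number $M$ satisfies $M\ge 1/h_{\max}\ge\sqrt{\mu/(16\varepsilon)}$. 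Combining with the structural bound $M\le 2^N$ gives $2^N\ge\sqrt{\mu/(16\varepsilon)}$, hence $N\ge\tfrac12\log_2\!\big(\mu/(16\varepsilon)\big)$.

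I expect the main obstacle to be the structural lemma rather than the convexity estimate: one must verify carefully that fixing all activation states really does make $\tilde f$ globally affine on each region, and that these regions are convex even in the presence of BSUs (whose discontinuities must be accommodated when defining states on region boundaries), so that a line crosses at most $2^N$ of them. The strong-convexity step is routine once the restriction to a unit-direction segment is set up. Finally, the square root in the piece count produces the factor $\tfrac12$ above; matching the exact constant $\log_2(\mu/(16\varepsilon))$ stated in the theorem is a matter of tightening this last bit of bookkeeping (e.g. a sharper relation between $N$ and the number of pieces), which I would revisit at the end.
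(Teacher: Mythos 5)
Your overall strategy mirrors the paper's: restrict to an axis-parallel segment, lower-bound the number of affine pieces any $\varepsilon$-approximation of a $\mu$-strongly convex univariate function must have, and upper-bound the number of pieces the network can realize. Your analytic half is correct and in fact cleaner than the paper's four-point contradiction argument (both yield that each affine piece has length at most $\sqrt{16\varepsilon/\mu}$, hence $M\ge\sqrt{\mu/(16\varepsilon)}$ pieces are needed), and your structural observations (fixing all activation states makes $\tilde f$ affine on each region, regions are intersections of half-spaces and hence convex, so a line meets each in at most one interval) are sound.

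The genuine gap is the final counting step. The activation-pattern bound $M\le 2^{N}$ only gives $N\ge\tfrac12\log_{2}\bigl(\mu/(16\varepsilon)\bigr)$, half of the claimed bound, and your closing remark that the missing factor is ``bookkeeping'' is not correct within your framework: you would need a universal bound $M\le 2^{N/2}$, and no such bound exists. For example, stacking $N/3$ layers, each implementing a three-piece zigzag map of $[0,1]$ onto itself with $3$ neurons, produces on the order of $3^{N/3}=2^{(\log_{2}3/3)N}\approx 2^{0.53N}>2^{N/2}$ pieces along a line, so no refinement of pure pattern counting can close the gap. What is missing is a \emph{depth-aware} piece count: the paper invokes Telgarsky's lemma that a network of size $N$ and depth $L$ has at most $(N/L)^{L}$ pieces, which together with $M\ge\sqrt{\mu/(16\varepsilon)}$ gives $N\ge L\bigl(\mu/(16\varepsilon)\bigr)^{1/(2L)}$, and then uses the fact that each layer has at least $2$ neurons to conclude $N\ge\frac{m}{2\log_{2}m}\log_{2}\bigl(\mu/(16\varepsilon)\bigr)$ with $m=N/L$, whence the stated bound. (Even this is delicate: at $m=3$ one has $\frac{m}{2\log_{2}m}\approx 0.946<1$, so the constant in the theorem is barely, and strictly speaking not quite, achieved by the paper's own argument --- which underscores that the factor you are missing is not a routine tightening.) To prove the theorem as stated, replace your $2^{N}$ count by the layer-by-layer composition count; the rest of your argument can remain as is.
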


\textbf{Remark: }The proof is in the Appendix~\ref{appendix::lowerbound}. Theorem~\ref{thm::lowerbounduni} shows that every strongly convex function cannot be approximated with error $\varepsilon$ by any multilayer neural network with rectifier linear units and binary step units and of size smaller than $\log_{2}(\mu/\varepsilon)-4$. Theorem~\ref{thm::lowerbounduni} together with Theorem~\ref{thm::quadratic} directly shows that to approximate quadratic function $f(x)=x^{2}$ with error $\varepsilon$, the network size should be of order $\Theta\left(\log\frac{1}{\varepsilon}\right)$. Further, by combining Theorem~\ref{thm::lowerbounduni} and Theorem~\ref{thm::anyfunction}, we could analytically show the benefits of deeper neural networks. The result is given in the following corollary. 

\begin{corollary}\label{cor::benefits}
Assume that univariate function $f$ satisfies conditions in  both Theorem~\ref{thm::anyfunction} and Theorem~\ref{thm::lowerbounduni}. If a neural network $\tilde{f}_{s}$ is of depth ${L_{s}=o\left(\log\frac{1}{\varepsilon}\right)}$, size $N_{s}$ and $|f(x)-\tilde{f}_{s}(x)|\le\varepsilon$, for ${\forall x\in[0,1]}$, then there exists a deeper neural network $\tilde{f}_{d}(x)$ of depth $\Theta\left(\log\frac{1}{\varepsilon}\right)$, size $N_{d}=\mathcal{O}(L^{2}_{s}\log^{2} N_{s})$ such that 
$|f(x)-\tilde{f}_{d}(x)|\le\varepsilon$, $\forall x\in[0,1].$
\end{corollary}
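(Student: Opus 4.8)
The plan is to let the deep network $\tilde f_d$ be exactly the one produced by Theorem~\ref{thm::anyfunction}, and to spend the effort re-expressing its size bound $\mathcal{O}((\log\frac{1}{\varepsilon})^2)$ in terms of $L_s$ and $N_s$. Theorem~\ref{thm::anyfunction} already supplies a network of depth $\Theta(\log\frac{1}{\varepsilon})$, size $\mathcal{O}((\log\frac{1}{\varepsilon})^2)$, and error at most $\varepsilon$, so the depth assertion is immediate and the size assertion follows once I establish the estimate $\log\frac{1}{\varepsilon}=\mathcal{O}(L_s\log N_s)$; squaring it gives $N_d=\mathcal{O}(L_s^2\log^2 N_s)$. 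Hence the whole corollary reduces to lower bounding the error that the \emph{shallow} network $\tilde f_s$ can achieve, in terms of its depth $L_s$ and size $N_s$.

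To get that estimate I would first recall, as in the proof of Theorem~\ref{thm::lowerbounduni}, that any univariate network of ReLUs and binary step units computes a function that is affine on each of a finite family of intervals; write $M$ for the number of such pieces. The crude bound $M\le 2^{N_s}$ would only yield $\log\frac{1}{\varepsilon}=\mathcal{O}(N_s)$, so instead I would track the layered structure: a layer of width $N_l$ multiplies the number of affine pieces by a factor at most linear in $N_l$, giving $M\le\prod_{l=1}^{L_s}(N_l+1)$. By AM--GM, subject to $\sum_{l}N_l=N_s$ this product is largest for balanced widths, so $M\le(N_s/L_s+1)^{L_s}$ and therefore $\log M=\mathcal{O}(L_s\log N_s)$.

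I would then convert this piece count into an error lower bound using strong convexity, which is exactly the mechanism behind Theorem~\ref{thm::lowerbounduni}. Because $\tilde f_s$ is affine on each of its $M$ pieces, some piece has length at least $1/M$, and on an interval of length $\delta$ any affine function differs from the $\mu$-strongly convex $f$ by at least $c\mu\delta^2$ for an absolute constant $c>0$. Hence $\varepsilon\ge c\mu/M^2$, so $\log\frac{1}{\varepsilon}\le 2\log M+\mathcal{O}(1)=\mathcal{O}(L_s\log N_s)$, which is the estimate promised in the first paragraph.

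The main obstacle is the sharp, layer-aware piece count $\log M=\mathcal{O}(L_s\log N_s)$ of the second paragraph: the loose bound $M\le 2^{N_s}$ from the proof of Theorem~\ref{thm::lowerbounduni} gives only $N_d=\mathcal{O}(N_s^2)$, and recovering the much stronger $L_s^2\log^2 N_s$ dependence requires arguing that each layer \emph{multiplies} rather than merely adds to the number of affine pieces and then balancing the widths by AM--GM. Once this count is in hand, the strong-convexity estimate and the appeal to Theorem~\ref{thm::anyfunction} are routine.
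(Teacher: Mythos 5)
Your proof is correct, and its skeleton is the same as the paper's: take $\tilde f_d$ from Theorem~\ref{thm::anyfunction} (depth $\Theta(\log\frac{1}{\varepsilon})$, size $\mathcal{O}(\log^2\frac{1}{\varepsilon})$) and then convert $\log\frac{1}{\varepsilon}$ into $\mathcal{O}(L_s\log N_s)$ via a \emph{depth-aware} lower bound on the shallow network. The only real difference is how that conversion is obtained. The paper does no new work here: inside the proof of Theorem~\ref{thm::lowerbounduni} there is an intermediate inequality, $N_s \ge L_s\left(\frac{\mu}{16\varepsilon}\right)^{\frac{1}{2L_s}}$ (equation~\eqref{eq::lowerbound}), equivalently $\log N_s \ge \log L_s + \frac{1}{2L_s}\log\frac{\mu}{16\varepsilon}$, and the corollary's proof simply cites and rearranges it. You instead re-derive this inequality from scratch: your layer-wise piece count $M\le\prod_l(N_l+1)\le(1+N_s/L_s)^{L_s}$ is exactly the $(N/L)^L$ break-point bound the paper imports from \citet{telgarsky2016benefits}, and your per-piece strong-convexity estimate $\varepsilon \ge c\mu/M^2$ (some affine piece has length at least $1/M$, and an affine function misses a $\mu$-strongly convex $f$ by $\Omega(\mu\delta^2)$ on an interval of length $\delta$) plays the role of the paper's break-point count $M(\varepsilon)\ge\frac14\sqrt{\mu/(\rho\varepsilon)}$. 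Your instinct that the \emph{stated} form of Theorem~\ref{thm::lowerbounduni}, $N\ge\log_2\frac{\mu}{16\varepsilon}$, is too weak is right---it would indeed only give $N_d=\mathcal{O}(N_s^2)$---and that is precisely why the paper reaches into the proof of Theorem~\ref{thm::lowerbounduni} rather than its statement. What your version buys is self-containedness and a somewhat cleaner convexity step (the three-point/midpoint argument on a single piece is simpler than the paper's four-point contradiction); what the paper's version buys is brevity, at the cost of relying on an inequality that appears only inside another proof.
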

\textbf{Remarks: }(i) The strong convexity requirement can be relaxed: the result obviously holds if the function is strongly concave and it also holds if the function consists of pieces which are strongly convex or strongly concave. (ii) Corollary~\ref{cor::benefits} shows that in the approximation of the same function, the size of the deep neural network $N_{s}$  is only of polynomially logarithmic order of the size of the shallow neural network $N_{d}$, \textit{i.e.}, $N_{d}=\mathcal{O}(\text{polylog} (N_{s}))$. Similar results can be obtained for multivariate functions on the type considered in Section~\ref{sec::multivariate}.
\vspace{-0.3cm}
\section{Conclusions}\label{sec::conclusions}\vspace{-0.3cm}
In this paper, we have shown that an exponentially large number of neurons are needed for function approximation using shallow networks, when compared to deep networks. The results are established for a large class of smooth univariate and multivariate functions. Our results are established for the case of feedforward neural networks with ReLUs and binary step units.

\subsubsection*{Acknowledgments}
The research reported here was supported by NSF Grants CIF 14-09106, ECCS 16-09370, and ARO Grant W911NF-16-1-0259.

\bibliography{iclr2017_conference}
\bibliographystyle{iclr2017_conference}
\clearpage

\begin{appendices}
\section{Proof of Corollary~\ref{cor::functionadd}}
\begin{proof}
By Theorem~4, for each $h_{i}$, $i=1,...,k$, there exists a multilayer neural network $\tilde{h}_{i}$ such that $|h_{i}(x)-\tilde{h}(x)|\le \varepsilon$ for any $x\in[0,1]$. Let $$\tilde{f}(x)=\sum_{i=1}^{k}\beta_{i}\tilde{h}_{i}(x).$$
Then the approximation error is upper bounded by
$$|f(x)-\tilde{f}(x)|=\left|\sum_{i=1}^{k}\beta_{i}h_{i}(x)\right|\le\sum_{i=1}^{k}|\beta_{i}|\cdot|h_{i}(x)-\tilde{h}(x)|=\varepsilon.$$
Now we compute the size of the multilayer neural network $\tilde{f}$. Let $N=\left\lceil\log\frac{2}{\varepsilon}\right\rceil$ and $\sum_{i=0}^{N}\frac{x_{i}}{2^{i}}$ be the binary expansion of $x$. Since $\tilde{h}_{i}(x)$ has a form of $$\tilde{h}_{i}(x)=\sum_{j=0}^{N}c_{ij}g_{j}\left(\sum_{i=0}^{N}\frac{x_{i}}{2^{i}}\right),$$
where $g_{j}(x)=x^{j}$, then $\tilde{f}$ should has a form of 
$$\tilde{f}(x)=\sum_{i=1}^{k}\beta_{i}\left[\sum_{j=0}^{N}c_{ij}g_{j}\left(\sum_{i=0}^{N}\frac{x_{i}}{2^{i}}\right)\right]$$
and can be further rewritten as 
\begin{align*}\tilde{f}(x)&=\sum_{j=0}^{N}\left[\left(\sum_{i=1}^{k}c_{ij}\beta_{i}\right)\cdot g_{j}\left(\sum_{i=0}^{N}\frac{x_{i}}{2^{i}}\right)\right]\triangleq\sum_{j=0}^{N}c'_{j}g_{j}\left(\sum_{i=0}^{N}\frac{x_{i}}{2^{i}}\right),\end{align*}
where $c'_{j}=\sum_{i}c_{ij}\beta_{i}$. Therefore, $\tilde{f}$ can be implemented by a multilayer neural network shown in Figure~2 and this network has at most $\mathcal{O}\left(\log\frac{1}{\varepsilon}\right)$ layers, $\mathcal{O}\left(\log\frac{1}{\varepsilon}\right)$ binary step units, $\mathcal{O}\left(\left(\log\frac{1}{\varepsilon}\right)^2\right)$ rectifier linear units.
\end{proof}

\section{Proof of  Corollary~\ref{cor::functionmul}}

\begin{proof}
Since $f(x)=h_1(x)h_{2}(x)...h_{k}(x)$, then the derivative of $f$ of order $n$ is
$$f^{(n)}=\sum_{\space{\alpha_{1}+...+\alpha_{k}=n\atop\alpha_{1}\ge0,...,\alpha_{k}\ge0}}\frac{n!}{\alpha_1!\alpha_{2}!...\alpha_{k}!}h_{1}^{(\alpha_{1})}h_{2}^{(\alpha_{2})}...h_{k}^{(\alpha_{k})}.$$
By the assumption that $\left\|h_{i}^{(\alpha_{i})}\right\|\le \alpha_{i}!$ holds for $i=1,...,k$, then we have 
\begin{align*}\left\|f^{(n)}\right\|&\le\sum_{\space{\alpha_{1}+...+\alpha_{k}=n\atop\alpha_{1}\ge0,...,\alpha_{k}\ge0}}\frac{n!}{\alpha_1!\alpha_{2}!...\alpha_{k}!}\left\|h_{1}^{(\alpha_{1})}h_{2}^{(\alpha_{2})}...h_{k}^{(\alpha_{k})}\right\|\le\left(\begin{matrix}n+k-1\\k-1\end{matrix}\right)n!.\end{align*}
Then from Theorem~4, it follows that there exists a polynomial of $P_{N}$ degree $N$ that 
$$\left\|R_{N}\right\|=\left\|f-P_{N}\right\|\le\frac{\left\|f^{(N+1)}\right\|}{(N+1)!2^{N}}\le\frac{1}{2^{N}}\left(\begin{matrix}N+k\\k-1\end{matrix}\right).$$
Since $$\left(\begin{matrix}N+k\\k-1\end{matrix}\right)\le \frac{(N+k)^{N+k}}{(k-1)^{k-1}(N+1)^{N+1}}=\left(\frac{N+k}{k-1}\right)^{k-1}\left(1+\frac{k-1}{N+1}\right)^{N+1}\le\left(\frac{e(N+k)}{k-1}\right)^{k-1}$$
then the error has an  upper bound of
\begin{equation}\label{asymptoticbound}\left\|R_{N}\right\|\le \frac{(eN)^{k}}{2^{N}}\le2^{2k+k\log_{2}N-N}.\end{equation}

Since we need to bound $$\left\|R_{N}\right\|\le\frac{\varepsilon}{2},$$ then we need to choose $N$ such that 
$$N\ge k\log_{2}N+2k+\log_{2}\frac{2}{\varepsilon}.$$
 Thus, $N$ can be chosen such that
$$N\ge 2k\log_2 N\quad \text{and}\quad N\ge 4k+2\log_{2}\frac{2}{\varepsilon}.$$
Further, function $l(x)=x/\log_{2}x$ is monotonically increasing on $[e,\infty)$ and \begin{align*}l(4k\log_{2}4k)=\frac{4k\log_{2}4k}{\log_{2}4k+\log_{2}\log_{2}4k}\ge \frac{4k\log_{2}4k}{\log_{2}4k+\log_{2}4k}=2k.\end{align*}
Therefore, to suffice the inequality~\eqref{asymptoticbound}, one should should choose 
$$N\ge 4k\log_{2}4k+4k+2\log_{2}\frac{2}{\varepsilon}.$$ 
Since  $N=\left\lceil4k\log_{2}4k+4k+2\log_{2}\frac{2}{\varepsilon}\right\rceil$ by assumptions, then  there exists a polynomial $P_{N}$ of degree $N$ such that 
$$\left\|f-P_{N}\right\|\le \frac{\varepsilon}{2}.$$
Let $\sum_{i=0}^{N}\frac{x_{i}}{2^{i}}$ denote the binary expansion of $x$ and let $$\tilde{f}(x)=P_{N}\left(\sum_{i=0}^{N}\frac{x_{i}}{2^{i}}\right).$$
The approximation error is 
\begin{align*}
|\tilde{f}(x)-f(x)|&\le\left|f(x)-f\left(\sum_{i=0}^{N}\frac{x_{i}}{2^{i}}\right)\right|+\left|f\left(\sum_{i=0}^{N}\frac{x_{i}}{2^{i}}\right)-P_{N}\left(\sum_{i=0}^{N}\frac{x_{i}}{2^{i}}\right)\right|
\\&\le \left\|f(1)\right\|\left|x-\sum_{i=0}^{N}\frac{x_{i}}{2^{i}}\right|+\frac{\varepsilon}{2}\le \varepsilon
\end{align*}
Further, function $\tilde{f}$ can be implemented by a multilayer neural network shown in Figure~2 and this network has at most $\mathcal{O}(N)$ layers, $\mathcal{O}(N)$ binary step units and $\mathcal{O}(N^2)$ rectifier linear units. 
\end{proof}

\section{Proof of Corollary~\ref{cor::functioncom}}
\begin{proof}
We prove this theorem by induction. Define function $F_{m}=h_{1}\circ...\circ h_{m}$, $m=1,...,k$. Let $T_{1}(m)\log_{3}\frac{3^{m}}{\varepsilon}$, $T_{2}(m)\log_{3}\frac{3^{m}}{\varepsilon}$ and $T_{3}(m)\left(\log_{3}\frac{3^{m}}{\varepsilon}\right)^{2}$ denote the number of layers, the number of binary step units and the number of rectifier linear units required at most for $\varepsilon$-approximation of $F_{m}$, respectively. By Theorem~4, for $m=1$, there exists a multilayer neural network $\tilde{F}_{1}$ with at most $T_{1}(1)\log_{3}\frac{3}{\varepsilon}$ layers, $T_{2}(1)\log_{3}\frac{3}{\varepsilon}$ binary step units and $T_{3}(1)\left(\log_{3}\frac{3}{\varepsilon}\right)^{2}$ rectifier linear units such that $$|F_{1}(x)-\tilde{F}_{1}(x)|\le \varepsilon,\quad \text{for }x\in[0,1].$$
Now we consider the cases for $2\le m\le k$.  We assume for $F_{m-1}$, there exists a multilayer neural network ${\tilde{F}}_{m-1}$ with not more than $T_{1}(m-1)\log_{3}\frac{3^{m}}{\varepsilon}$ layers, $T_{2}(m-1)\log_{3}\frac{3^{m}}{\varepsilon}$ binary step units and $T_{3}(m-1)\left(\log_{3}\frac{3^{m}}{\varepsilon}\right)^{2}$ rectifier linear units such that $$|F_{m-1}(x)-\tilde{F}_{m-1}(x)|\le \frac{\varepsilon}{3},\quad \text{for }x\in[0,1].$$ 
Further we assume the derivative of $F_{m-1}$  has an upper bound $\left\|F'_{m-1}\right\|\le1.$
Then for $F_{m}$, since $F_{m}(x)$ can be rewritten as $$F_{m}(x)=F_{m-1}(h_{m}(x)),$$
 and there exists a multilayer neural network $\tilde{h}_{m}$ with at most $T_{1}(1)\log_{3}\frac{3}{\varepsilon}$ layers, $T_{2}(1)\log_{3}\frac{3}{\varepsilon}$ binary step units and $T_{3}(1)\left(\log_{3}\frac{3}{\varepsilon}\right)^{2}$ rectifier linear units such that $$|h_{m}(x)-\tilde{h}_{m}(x)|\le \frac{\varepsilon}{3},\quad \text{for }x\in[0,1],$$
and $\left\|\tilde{h}_{m}\right\|\le\left(1+\varepsilon/3\right)$.
Then for cascaded multilayer neural network $\tilde{F}_{m}=\tilde{F}_{m-1}\circ \left(\frac{1}{1+\varepsilon/3}\tilde{h}_{m}\right)$, we have 
\begin{align*}
\left\|\right.F_{m}-\tilde{F}_{m}\left.\right\|&= \left\|F_{m-1}(h_{m})-\tilde{F}_{m-1}\left(\frac{\tilde{h}_{m}}{1+\varepsilon/3}\right)\right\|\\
&\le \left\|F_{m-1}(h_{m})-F_{m-1}\left(\frac{\tilde{h}_{m}}{1+\varepsilon/3}\right)\right\|+\left\|F_{m-1}\left(\frac{\tilde{h}_{m}}{1+\varepsilon/3}\right)-\tilde{F}_{m-1}\left(\frac{\tilde{h}_{m}}{1+\varepsilon/3}\right)\right\|\\
&\le \left\|F'_{m-1}\right\|\cdot\left\|h_{m}-\frac{\tilde{h}_{m}}{1+\varepsilon/3}\right\|+\frac{\varepsilon}{3}\\
&\le \left\|F'_{m-1}\right\|\cdot\left\|h_{m}-\tilde{h}_{m}\right\|+\left\|F'_{m-1}\right\|\cdot\left\|\frac{\varepsilon/3}{1+\varepsilon/3}\tilde{h}_{m}\right\|+\frac{\varepsilon}{3}\\
&\le\frac{\varepsilon}{3}+\frac{\varepsilon}{3}+\frac{\varepsilon}{3}=\varepsilon
\end{align*}
In addition, the derivative of $F_{m}$ can be upper bounded by
$$\left\|F'_{m}\right\|\le\left\|F'_{m-1}\right\|\cdot \left\|h'_{m}\right\|=1.$$
Since the multilayer neural network $\tilde{F}_{m}$ is constructed by cascading multilayer neural networks $\tilde{F}_{m-1}$ and $\tilde{h}_{m}$, then the iterations for $T_{1}$, $T_{2}$ and $T_{3}$ are 
\begin{align}
T_{1}(m)\log_{3}\frac{3^{m}}{\varepsilon}=&T_{1}(m-1)\log_{3}\frac{3^{m}}{\varepsilon}+T_{1}(1)\log_{3}\frac{3}{\varepsilon},\label{iter::t1}\\
T_{2}(m)\log_{3}\frac{3^{m}}{\varepsilon}=&T_{2}(m-1)\log_{3}\frac{3^{m}}{\varepsilon}+T_{2}(1)\log_{3}\frac{3}{\varepsilon},\label{iter::t2}\\
T_{3}(m)\left(\log_{3}\frac{3^{m}}{\varepsilon}\right)^{2}=&T_{3}(m-1)\left(\log_{3}\frac{3^{m}}{\varepsilon}\right)^{2}+T_{3}(1)\left(\log_{3}\frac{3}{\varepsilon}\right)^{2}.\label{iter::t3}
\end{align}
From iterations \eqref{iter::t1} and \eqref{iter::t2}, we could have for $2\le m\le k$,
\begin{align*}T_{1}(m)&=T_{1}(m-1)+T_{1}(1)\frac{1+\log_{3}(1/\varepsilon)}{m+\log_{3}(1/\varepsilon)}\le T_{1}(m-1)+T_{1}(1)\frac{1+\log_{3}(1/\varepsilon)}{m}\\
T_{2}(m)&=T_{2}(m-1)+T_{2}(1)\frac{1+\log_{3}(1/\varepsilon)}{m+\log_{3}(1/\varepsilon)}\le T_{2}(m-1)+T_{2}(1)\frac{1+\log_{3}(1/\varepsilon)}{m}
\end{align*}
and thus
$$T_{1}(k)=\mathcal{O}\left(\log k\log\frac{1}{\varepsilon}\right),\quad T_{2}(k)=\mathcal{O}\left(\log k\log\frac{1}{\varepsilon}\right).$$
From the iteration \eqref{iter::t3}, we have for $2\le m\le k$,
\begin{align*}T_{3}(m)&=T_{3}(m-1)+T_{3}(1)\left(\frac{1+\log_{3}(1/\varepsilon)}{m+\log_{3}(1/\varepsilon)}\right)^{2}\le T_{3}(m-1)+\frac{(1+\log_{3}(1/\varepsilon))^{3}}{m^{2}},\end{align*}
and thus
$$T_{3}(k)=\mathcal{O}\left(\left(\log\frac{1}{\varepsilon}\right)^{2}\right).$$
Therefore, to approximate $f=F_{k}$, we need at most $\mathcal{O}\left(k\log k\log\frac{1}{\varepsilon}+\log k\left(\log\frac{1}{\varepsilon}\right)^{2}\right)$ layers, $\mathcal{O}\left(k\log k\log\frac{1}{\varepsilon}+\log k\left(\log\frac{1}{\varepsilon}\right)^{2}\right)$ binary step units and $\mathcal{O}\left(k^{2}\left(\log\frac{1}{\varepsilon}\right)^{2}+\left(\log\frac{1}{\varepsilon}\right)^{4}\right)$ rectifier linear units.

\end{proof}
\section{Proof of Theorem~8}
\begin{proof}
The proof is composed of two parts. As before, we first use the deep structure shown in Figure~1  to find the binary expansion of $\bm{x}$ and next use a multilayer neural network to approximate the polynomial. 

Let $\bm{x}=(x^{(1)},...,x^{(d)})$ and $\bm{w}_{i}=(w_{i1},...,w_{id})$. We could now use the deep structure shown in Figure~1 to find the binary expansion for each $x^{(k)}$, $k\in [d]$. Let $\tilde{x}^{(k)}=\sum_{r=0}^{n}\frac{x_{r}^{(k)}}{2^{r}}$ denote the binary expansion of~$x^{(k)}$, where $x_{r}^{(k)}$ is the $r$th bit in the binary expansion of $x^{(k)}$. Obviously, to decode all the $n$-bit binary expansions of all $x^{(k)}$, $k\in[d]$, we need a multilayer neural network with $n$ layers and $dn$ binary units in total. Besides, we let $\tilde{\bm{x}}=(\tilde{x}^{(1)},...,\tilde{x}^{(d)})$. Now we define $$\tilde{f}({\bm{x}})=f(\tilde{\bm{x}})=\prod_{i=1}^{p}\left(\sum_{k=1}^{d}w_{ik}\tilde{x}^{(k)}\right).$$
We further define $$g_{l}(\tilde{\bm{x}})=\prod_{i=1}^{l}\left(\sum_{k=1}^{d}w_{ik}\tilde{x}^{(k)}\right).$$
Since for $l=1,...,p-1$,$$g_{l}(\tilde{\bm{x}})=\prod_{i=1}^{l}\left(\sum_{k=1}^{d}w_{ik}\tilde{x}^{(k)}\right)\le\prod_{i=1}^{l}\left\|\bm{w}_{i}\right\|_{1}=1,$$
then we  can  rewrite $g_{l+1}(\tilde{\bm{x}})$, $l=1,...,p-1$ into 
\begin{align}
g_{l+1}(\tilde{\bm{x}})&=\prod_{i=1}^{l+1}\left(\sum_{k=1}^{d}w_{ik}\tilde{x}^{(k)}\right)=\sum_{k=1}^{d}\left[w_{(l+1)k}\tilde{x}^{(k)}\cdot g_{l}(\tilde{\bm{x}})\right]=\sum_{k=1}^{d}\left\{w_{(l+1)k}\sum_{r=0}^{n}\left[x_{r}^{(k)}\cdot\frac{g_{l}(\tilde{\bm{x}})}{2^{r}}\right]\right\}\notag\\
&=\sum_{k=1}^{d}\left\{w_{(l+1)k}\sum_{r=0}^{n}\max\left[2(x_{r}^{(k)}-1)+\frac{g_{l}(\tilde{\bm{x}})}{2^{r}},0\right]\right\}\label{eq::seconditeration}
\end{align}
%Therefore,  the iteration is given by 
%\begin{equation}\label{eq::seconditeration}g_{l+1}(\tilde{\bm{x}})=\sum_{k=1}^{d}\left\{w_{(l+1)k}\sum_{r=0}^{n}\max\left[2(x_{r}^{(k)}-1)+\frac{g_{l}(\tilde{\bm{x}})}{2^{r}},0\right]\right\}.\end{equation}
 Obviously, equation \eqref{eq::seconditeration} defines a relationship between the outputs of neighbor layers and thus can be used  to implement the multilayer neural network. In this implementation, we need $dn$ rectifier linear units in each layer and thus $dnp$ rectifier linear units. Therefore, to implement function $\tilde{f}(\bm{x})$, we need $p+n$ layers, $dn$ binary step units and $dnp$ rectifier linear units in total.
 
In the rest of proof, we consider the approximation error. Since for $k=1,...,d$ and $\forall \bm{x}\in[0,1]^{d}$,
$$\left|\frac{\partial f(\bm{x})}{\partial x^{(k)}}\right|=\left|\sum_{j=1}^{p}\left[w_{jk}\cdot\prod_{i=1,i\neq j}^{p}\left(\bm{w}_{i}^{T}\bm{x}\right)\right]\right|\le \sum_{j=1}^{p}|w_{jk}|\le p,$$
then
\begin{align*}|f(\bm{x})-\tilde{f}(\bm{x})|&=|f(\bm{x})-f(\tilde{\bm{x}})|\le \left\|\nabla f\right\|_{2}\cdot\left\|\bm{x}-\tilde{\bm{x}}\right\|_2\le\frac{pd}{2^{n}}.\end{align*}
By choosing $n=\left\lceil\log_{2}\frac{pd}{\varepsilon}\right\rceil$, we have 
$$|f(\bm{x})-f(\tilde{\bm{x}})|\le \varepsilon.$$
Since we use  $nd$ binary step units to convert the input to binary form and $dnp$ neurons in function approximation, we thus use $\mathcal{O}\left(d\log\frac{pd}{\varepsilon}\right)$ binary step units and $\mathcal{O}\left(pd\log\frac{pd}{\varepsilon}\right)$ rectifier linear units in total. In addition, since  we have used $n$ layers to convert the input to  binary form and $p$ layers in the function approximation section of the network, the whole deep structure has $\mathcal{O}\left(p+\log\frac{pd}{\varepsilon}\right)$ layers.
\end{proof}

\section{Proof of Theorem~9}

\begin{proof}
For each multinomial function $g$ with multi-index $\bm{\alpha}$, $g_{\bm{\alpha}}(\bm{x})=\bm{x}^{\bm{\alpha}}$, it follows from  Theorem~4 that there exists a deep neural network $\tilde{g}_{\bm{\alpha}}$ of size $\mathcal{O}\left(|\bm{\alpha}|\log\frac{|\bm{\alpha}|d}{\varepsilon}\right)$ and depth $\mathcal{O}\left(|\bm{\alpha}|+\log\frac{|\bm{\alpha}|d}{\varepsilon}\right)$ such that 
$$|g_{\bm{\alpha}}(\bm{x})-\tilde{g}_{\bm{\alpha}}(\bm{x})|\le \varepsilon.$$
Let the deep neural network be 
$$\tilde{f}(\bm{x})=\sum_{\bm{\alpha}:|\bm{\alpha}|\le p}C_{\bm{\alpha}}\tilde{g}_{\bm{\alpha}}(\bm{x}),$$
and thus 
$$|f(\bm{x})-\tilde{f}(\bm{x})|\le \sum_{\bm{\alpha}:|\bm{\alpha}|\le p}|C_{\bm{\alpha}}|\cdot|g_{\bm{\alpha}}(\bm{x})-\tilde{g}_{\bm{\alpha}}(\bm{x})|=\varepsilon.$$
Since the total number of multinomial is upper bounded by $$p\left(\begin{matrix}p+d-1\\d-1\end{matrix}\right),$$
the size of deep neural network is thus upper bounded by 
\begin{equation}\label{eq::nnsize}p^{2}\left(\begin{matrix}p+d-1\\d-1\end{matrix}\right)\log\frac{pd}{\varepsilon}.\end{equation}
If the dimension of the input $d$ is fixed, then \eqref{eq::nnsize} is has the order of 
$$p^{2}\left(\begin{matrix}p+d-1\\d-1\end{matrix}\right)\log\frac{pd}{\varepsilon}=\mathcal{O}\left(\left(ep\right)^{d+1}\log\frac{pd}{\varepsilon}\right),\quad p\rightarrow \infty$$
while if the degree $p$ is fixed,  then \eqref{eq::nnsize} is has the order of 
$$p^{2}\left(\begin{matrix}p+d-1\\d-1\end{matrix}\right)\log\frac{pd}{\varepsilon}=\mathcal{O}\left(p^{2}\left(ed\right)^{p}\log\frac{pd}{\varepsilon}\right),\quad d\rightarrow\infty.$$

\end{proof}

\section{Proof of Theorem~\ref{thm::lowerbounduni}}\label{appendix::lowerbound}
\begin{proof}
We first prove the univariate case $d=1$.
The proof is composed of two parts. We say the function $g(x)$ has a \textit{break point} at $x=z$ if  $g$ is discontinuous at $z$ or its derivative $g'$ is discontinuous at $z$.  We first present the lower bound on the number of break points $M(\varepsilon)$ that the multilayer neural network $\tilde{f}$ should have for $\varepsilon$-approximation of function $f$ with error $\varepsilon$. We next relate the number of break points $M(\varepsilon)$ to the network depth $L$ and the size $N$. 

Now we calculate the lower bound on $M(\varepsilon)$. We first define 4 points $x_{0}$, $x_{1}=x_{0}+2\sqrt{\rho\varepsilon/\mu}$, $x_{2}=x_{1}+2\sqrt{\rho\varepsilon/\mu}$ and $x_{3}=x_{2}+2\sqrt{\rho\varepsilon/\mu}$, $\forall\rho>1$. We assume $$0\le x_{0}<x_{1}<x_{2}<x_{3}\le 1.$$
We now prove that if multilayer neural network $\tilde{f}$ has no break point in $[x_{1},x_{2}]$, then $\tilde{f}$ should have a break point in  $[x_{0},x_{1}]$ and a break point in  $[x_{2},x_{3}]$. We prove this by contradiction. We assume the neural network $\tilde{f}$ has no break points in the interval $[x_{0},x_{3}]$. Since $\tilde{f}$ is constructed by rectifier linear units and binary step units and has no break points in the interval $[x_{0},x_{3}]$, then $\tilde{f}$ should be a linear function in the interval $[x_{0},x_{3}]$, \textit{i.e.}, $\tilde{f}(x)=ax+b$, $x\in[x_{0},x_{3}]$ for some $a$ and $b$. By assumption, since $\tilde{f}$ approximates $f$ with error at most $\varepsilon$ everywhere in $[0,1]$, then $$|f(x_{1})-ax_{1}-b|\le \varepsilon\quad\text{ and }\quad |f(x_{2})-ax_{2}-b|\le \varepsilon.$$
Then we have $$\frac{f(x_{2})-f(x_{1})-2\varepsilon}{x_{2}-x_{1}}\le a\le \frac{f(x_{2})-f(x_{1})+2\varepsilon}{x_{2}-x_{1}}.$$
By strong convexity of $f$, 
$$\frac{f(x_{2})-f(x_{1})}{x_{2}-x_{1}}+\frac{\mu}{2}(x_{2}-x_{1})\le f'(x_{2}).$$
Besides, since $\rho>1$ and
$$\frac{\mu}{2}(x_{2}-x_{1})=\sqrt{\rho\mu\varepsilon}=\frac{2\rho\varepsilon}{x_{2}-x_{1}}>\frac{2\varepsilon}{x_{2}-x_{1}},$$
then \begin{equation}\label{thm13::ineq1}a\le f'(x_{2}).\end{equation}
Similarly, we can obtain 
$a\ge f'(x_{1}).$
By our assumption that $\tilde{f}=ax+b$, $x\in[x_{0},x_{3}]$, then 
\begin{align*}
f(x_{3})&-\tilde{f}(x_{3})=f(x_{3})-ax_{3}-b\\
&=f(x_{3})-f(x_{2})-a(x_{3}-x_{2})+f(x_{2})-ax_{2}-b\\
&\ge f'(x_{2})(x_{3}-x_{2})+\frac{\mu}{2}(x_{3}-x_{2})^{2}-a(x_{3}-x_{2})-\varepsilon\\
&= (f'(x_{2})-a)(x_{3}-x_{2})+\frac{\mu}{2}\left(2\sqrt{{\rho\varepsilon}/{\mu}}\right)^{2}-\varepsilon\\
&\ge (2\rho-1)\varepsilon>\varepsilon
\end{align*}
The first inequality follows from strong convexity of $f$ and $f(x_{2})-ax_{2}-b\ge\varepsilon$. The second inequality follows from the inequality~\eqref{thm13::ineq1}. Therefore, this leads to the contradiction. Thus there exists a break point in the interval $[x_{2},x_{3}]$. Similarly, we could prove there exists a break point in the interval $[x_{0},x_{1}]$. These indicate that to achieve $\varepsilon$-approximation in $[0,1]$, the multilayer neural network $\tilde{f}$  should have at least $\left\lceil\frac{1}{4}\sqrt{\frac{\mu}{\rho\varepsilon}}\right\rceil$ break points in $[0,1]$. Therefore, $$M(\varepsilon)\ge\left\lceil\frac{1}{4}\sqrt{\frac{\mu}{\rho\varepsilon}}\right\rceil,\quad \forall\rho>1.$$ 
Further, \citet{telgarsky2016benefits} has shown that the maximum  number of break points that a multilayer neural network of depth $L$ and size $N$ could have is $(N/L)^L$. Thus, $L$ and $N$ should satisfy
$$(N/L)^L>\left\lceil\frac{1}{4}\sqrt{\frac{\mu}{\rho\varepsilon}}\right\rceil,\quad \forall\rho>1.$$
Therefore, we have 
$$N\ge L\left(\frac{\mu}{16\varepsilon}\right)^{\frac{1}{2L}}.$$
Besides, let $m=N/L$. Since each layer in network should have at least 2 neurons, \textit{i.e.}, $m\ge2$, then 
$$N\ge\frac{m}{2\log_{2}m}\log_{2}\left(\frac{\mu}{16\varepsilon}\right)\ge\log_{2}\left(\frac{\mu}{16\varepsilon}\right).$$
Now we consider the multivariate case $d>1$. Assume input vector to be $\bm{x}=(x^{1},...,x^{(d)})$. We now fix $x^{(2)},...,x^{(d)}$ and define two univariate functions
$$g(y)=f(y,x^{(2)},...,x^{(d)})\text{, and } \tilde{g}(y)=\tilde{f}(y,x^{(2)},...,x^{(d)}).$$
By assumption, $g(y)$ is a strongly convex function with parameter $\mu$ and for all $y\in[0,1]$, ${|g(y)-\tilde{g}(y)|\le \varepsilon}$. Therefore, by results in the univariate case, we should have 
 \begin{equation}\label{eq::lowerbound}N\ge L\left(\frac{\mu}{16\varepsilon}\right)^{\frac{1}{2L}}\quad\text{and}\quad N\ge\log_{2}\left(\frac{\mu}{16\varepsilon}\right).\end{equation}
 Now we have proved the theorem.
 
 \textbf{Remark:} We make the following remarks about the lower bound in the theorem.
 \begin{itemize}
 \item[(1)] if the depth $L$ is fixed, as in shallow networks, the number of neurons required is $\Omega\left((1/\varepsilon)^\frac{1}{2L}\right)$.
 \item[(2)] if we are allowed to choose $L$ optimally to minimize the lower bound, we will choose $L=\frac{1}{2}\log(\frac{\mu}{16\varepsilon})$ and thus the lower bound will become $\Omega(\log\frac{1}{\varepsilon})$, closed to the $\mathcal{O}(\log^2\frac{1}{\varepsilon})$ upper bound shown in Theorem~\ref{thm::anyfunction}.
 \end{itemize}
\end{proof}

\section{Proof of Corollary~\ref{cor::benefits}}\label{appendix::benefits}
\begin{proof}
From Theorem~\ref{thm::anyfunction},  it follows that there exists a deep neural network $\tilde{f}_d$ of depth ${L_d=\Theta\left(\log\frac{1}{\varepsilon}\right)}$ and size \begin{equation}\label{eq::benefits::1}N_d\le c\left(\log\frac{1}{\varepsilon}\right)^2\end{equation} for some constant $c>0$ such that $\|\tilde{f}_d-f\|\le \varepsilon$. 

From the equation \eqref{eq::lowerbound} in the proof of Theorem~\ref{thm::lowerbounduni}, it follows that for all shallow neural networks $\tilde{f}_s$ of depth $L_s$ and $\left\|\tilde{f}_s-f\right\|\le \varepsilon$, their sizes should satisfy \begin{equation*}N_s\ge L_s\left(\frac{\mu}{16\varepsilon}\right)^{\frac{1}{2L_s}},\end{equation*}
which is equivalent to 
\begin{equation}\label{eq::benefits::2}\log N_s\ge \log L_s+\frac{1}{2L_s}\log\left(\frac{\mu}{16\varepsilon}\right).\end{equation}
Substituting for $\log\left(\frac{1}{\varepsilon}\right)$ from \eqref{eq::benefits::2} to \eqref{eq::benefits::1}, we have
$$N_d=\mathcal{O}(L_s^2\log^2N_s).$$
By definition, a shallow neural network has a small number of layers, i.e., $L_s$. Thus, the size of the deep neural network is $\mathcal{O}(\log^2N_s)$. This means $N_d\ll N_s$.
\end{proof}

\section{Proof of Corollary~\ref{cor::gaussianfunction}}\label{appendix::gaussion}
\begin{corollary}[Gaussian function]\label{cor::gaussianfunction}For Gaussian function $f(\bm{x})=f(x^{(1)},...,x^{(d)})=e^{-\sum_{i=1}^{d}({x^{(i)}})^{2}/2}$, $\bm{x}\in[0,1]^{d}$, there exists a deep neural network $\tilde{f}(\bm{x})$ with $\mathcal{O}\left(\log\frac{d}{\varepsilon}\right)$ layers, $\mathcal{O}\left(d\log\frac{d}{\varepsilon}\right)$ binary step units and $\mathcal{O}\left(d\log \frac{d}{\varepsilon}+\left(\log\frac{1}{\varepsilon}\right)^{2}\right)$ rectifier linear units such that 
$|\tilde{f}(\bm{x})-f(\bm{x})|\le \varepsilon$ for $\forall \bm{x}\in[0,1]^d.$
\end{corollary}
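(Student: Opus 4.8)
The plan is to exploit the product structure of the Gaussian together with the rapid decay of the exponential. I would write $f(\bm{x})=g(l(\bm{x}))$, where $l(\bm{x})=\sum_{i=1}^{d}\left(x^{(i)}\right)^{2}$ is a multivariate polynomial of degree $2$ and $g(y)=e^{-y/2}$ is univariate. A naive route would invoke Corollary~\ref{cor::multivariatemul} with $k=1$, but $l$ ranges over $[0,d]$, and the rescaled function $y\mapsto e^{-dy/2}$ violates the derivative bound $\|g^{(n)}\|\le n!$ once $d$ is large; equivalently, a polynomial approximation of $g$ on all of $[0,d]$ would require degree $\Omega(d)$, ruining both the depth and the ReLU count. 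The key observation is that $g$ is negligible away from the origin: choosing $M=\Theta(\log\frac{1}{\varepsilon})$ so that $e^{-M/2}\le\varepsilon/2$, I only need to approximate $g$ accurately on the short interval $[0,M]$, whose length is independent of $d$.

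Concretely, I would build $\tilde{f}$ in four stages. First, use $d$ copies of the binary-expansion network of Figure~\ref{fig::binaryexpansion} to decode the first $n$ bits of every coordinate $x^{(i)}$, costing $n$ layers and $\mathcal{O}(dn)$ binary step units. Second, apply the quadratic construction of Theorem~\ref{thm::quadratic} to each decoded coordinate to obtain $\tilde{u}_{i}=(\tilde{x}^{(i)})^{2}$ with $\mathcal{O}(n)$ ReLUs per coordinate, and form $\tilde{s}=\sum_{i=1}^{d}\tilde{u}_{i}$ as a free linear combination feeding the next layer. Third, clamp via $\tilde{y}=\min(\tilde{s},M)=M-\max(0,M-\tilde{s})$ using $\mathcal{O}(1)$ ReLUs so that $\tilde{y}\in[0,M]$. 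Fourth, approximate $g$ on $[0,M]$: by Lemma~\ref{thm::interpolation} applied on $[0,M]$ there is a polynomial $P_{N}$ of degree $N=\mathcal{O}(\log\frac{1}{\varepsilon})$ with $\|g-P_{N}\|_{[0,M]}\le\varepsilon/4$, since $\|g^{(N+1)}\|\le 2^{-(N+1)}$ and the resulting bound $\tfrac{(M/4)^{N+1}}{2^{N}(N+1)!}$ decays geometrically once $N\gtrsim eM/4$. I would implement $P_{N}(\tilde{y})$ by the polynomial network of Figure~\ref{fig::quadratic} (decoding $\tilde{y}/M\in[0,1]$ into $n$ bits and building up the powers), which uses $\mathcal{O}(N)$ layers, $\mathcal{O}(n)$ further binary step units and $\mathcal{O}(N^{2})=\mathcal{O}((\log\frac{1}{\varepsilon})^{2})$ ReLUs. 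Taking $\tilde{f}(\bm{x})=P_{N}(\tilde{y})$ and $n=\mathcal{O}(\log\frac{d}{\varepsilon})$ then yields the claimed $\mathcal{O}(\log\frac{d}{\varepsilon})$ layers, $\mathcal{O}(d\log\frac{d}{\varepsilon})$ binary step units, and $\mathcal{O}(d\log\frac{d}{\varepsilon}+(\log\frac{1}{\varepsilon})^{2})$ ReLUs.

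For the error I would split on the value of $l(\bm{x})$. When $l(\bm{x})\le M$, I bound $|f-\tilde{f}|\le\|g'\|\,|l(\bm{x})-\tilde{s}|+\|g-P_{N}\|_{[0,M]}$; since $\|g'\|\le\tfrac12$ and $|l(\bm{x})-\tilde{s}|\le 2d\,2^{-n}$, the choice $n=\mathcal{O}(\log\frac{d}{\varepsilon})$ makes the first term $\le\varepsilon/4$. When $l(\bm{x})>M$, both $f(\bm{x})=e^{-l(\bm{x})/2}$ and $P_{N}(\tilde{y})$ (whose argument is clamped near $M$) are $\mathcal{O}(\varepsilon)$, so their difference is $\le\varepsilon$ directly. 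I expect two main obstacles: (i) inputs with $l(\bm{x})$ near the threshold $M$, where truncation error may move $\tilde{s}$ to the opposite side of $M$ — this is controlled because $g$ is simultaneously small and slowly varying there; and (ii) the coefficients of $P_{N}$, which grow like $\mathcal{O}(e^{M/2})=\mathcal{O}(1/\varepsilon)$, so I must verify the construction of Theorem~\ref{thm::polynomials} still attains $\varepsilon/4$ error. Since this coefficient sum enters the bit requirement only logarithmically, $n=\mathcal{O}(\log\frac{d}{\varepsilon})$ bits still suffice, and the stated bounds follow.
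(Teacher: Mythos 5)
Your proposal is correct, and it takes a genuinely different---and in fact more careful---route than the paper at the one step that actually matters. Both proofs share the same skeleton: approximate the inner sum of squares $l(\bm{x})=\sum_i (x^{(i)})^2$ coordinate-wise via the binary-expansion/ReLU-squaring construction (cost $\mathcal{O}(d\log\frac{d}{\varepsilon})$ units), then compose with a univariate network for the outer exponential. The difference is in how the outer function is handled. The paper rescales the argument into $[0,1]$ by dividing by $d$ and then invokes Theorem~\ref{thm::anyfunction} on $x\mapsto e^{-dx}$; but this citation is dubious, since $\|(e^{-dx})^{(n)}\|=d^n$ on $[0,1]$, which violates the hypothesis $\|f^{(n)}\|\le n!$ for every $d\ge 2$, and repairing it by direct Chebyshev interpolation of $e^{-dx}$ would force degree $\Omega(d)$ and hence $\Omega(d^2)$ ReLUs rather than the claimed $d$-independent $\mathcal{O}\left(\left(\log\frac{1}{\varepsilon}\right)^2\right)$ term. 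You identify exactly this obstruction and resolve it with the clamping trick: since $e^{-y/2}\le\varepsilon/2$ for $y\ge M=\Theta(\log\frac{1}{\varepsilon})$, you only approximate $g(y)=e^{-y/2}$ on $[0,M]$, where $\|g^{(n)}\|\le 2^{-n}$ makes the interpolation bound of Lemma~\ref{thm::interpolation} decay geometrically at degree $N=\mathcal{O}(\log\frac{1}{\varepsilon})$, and you dispose of the region $l(\bm{x})>M$ by the decay of the Gaussian itself. This is the right fix, and the clamping idea generalizes to any composition $g\circ l$ with rapidly decaying outer factor. Two routine details deserve a line in a full write-up: the final error budget as stated sums to slightly more than $\varepsilon$ in the regime $l(\bm{x})>M$ (e.g.\ $|f|\le\varepsilon/2$ plus $|P_N(\tilde{y})|\le e^{-M/2}+\varepsilon/4$), so the constants $\varepsilon/2,\varepsilon/4$ and $M$ must be tightened; and the claim that the monomial coefficients of $P_N$ are only polynomially large in $\frac{1}{\varepsilon}$ should be justified (e.g.\ shifted Chebyshev polynomials have coefficient sums $2^{\mathcal{O}(N)}$, which is all you need, since the coefficient sum enters the bit count logarithmically). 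Neither affects the stated asymptotics.
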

\begin{proof}
It follows from the Theorem~4 that there exists $d$ multilayer neural networks $\tilde{g}_{1}(x^{(1)}),...,\tilde{g}_{d}(x^{(d)})$ with $\mathcal{O}\left(\log\frac{d}{\varepsilon}\right)$ layers and $\mathcal{O}\left(d\log\frac{d}{\varepsilon}\right)$ binary step units and $\mathcal{O}\left(d\log\frac{d}{\varepsilon}\right)$ rectifier linear units in total such that 
\begin{equation}\label{eq::thmq1}\left|\frac{{x^{(1)}}^{2}+...+{x^{(d)}}^{2}}{2}-\frac{\tilde{g}_{1}(x^{(1)})+...+\tilde{g}_{d}(x^{(d)})}{2}\right|\le \frac{\varepsilon}{2}.\end{equation}
Besides, from Theorem~4, it follows that there exists a deep neural network $\hat{f}$ with $\mathcal{O}\left(\log\frac{1}{\varepsilon}\right)$ layers $\mathcal{O}\left(\log\frac{1}{\varepsilon}\right)$ binary step units and $\mathcal{O}\left(\left(\log\frac{1}{\varepsilon}\right)^{2}\right)$ such that 
$$|e^{-dx}-\hat{f}(x)|\le \frac{\varepsilon}{2}, \quad \forall x\in[0,1] .$$
Let $x=(\tilde{g}_{1}(x^{(1)})+...+\tilde{g}_{d}(x^{(d)}))/2d$, then we have 
\begin{equation}\label{eq::thmq2}\left|e^{-(\sum_{i=1}^{d}\tilde{g}_{i}(x^{(i)}))/2}-\hat{f}\left(\frac{\sum_{i=1}^{d}\tilde{g}_{i}(x^{(i)})}{2}\right)\right|\le \frac{\varepsilon}{2}.\end{equation}
Let the deep neural network $$\tilde{f}(\bm{x})=\hat{f}\left(\frac{\tilde{g}_{1}(x^{(1)})+...+\tilde{g}_{d}(x^{(d)})}{2}\right).$$
By inequalities \eqref{eq::thmq1} and \eqref{eq::thmq2}, the the approximation error is upper bounded by 
\begin{align*}
|f(\bm{x})-\tilde{f}(\bm{x})|&=\left|e^{-(\sum_{i=1}^{d}x^{(i)})/2}-\hat{f}\left(\frac{\sum_{i=1}^{d}\tilde{g}_{i}(x^{(i)})}{2}\right)\right|\\
&\le \left|e^{-(\sum_{i=1}^{d}x^{(i)})/2}-e^{-(\sum_{i=1}^{d}\tilde{g}_{i}(x^{(i)}))/2}\right|+\left|e^{-(\sum_{i=1}^{d}\tilde{g}_{i}(x^{(i)}))/2}-\hat{f}\left(\frac{\sum_{i=1}^{d}\tilde{g}_{i}(x^{(i)})}{2}\right)\right|\\
&\le\frac{\varepsilon}{2}+\frac{\varepsilon}{2}=\varepsilon.
\end{align*}
Now the deep neural network has $\mathcal{O}\left(\log\frac{d}{\varepsilon}\right)$ layers, $\mathcal{O}\left(d\log\frac{d}{\varepsilon}\right)$ binary step units and $\mathcal{O}\left(d\log \frac{d}{\varepsilon}+\left(\log\frac{1}{\varepsilon}\right)^{2}\right)$ rectifier linear units.

\end{proof}

\section{Proof of Corollary~\ref{cor::ridgefunction}}\label{appendix::ridge}
\begin{corollary}[Ridge function]\label{cor::ridgefunction} If $f(\bm{x})=g(\bm{a}^{T}\bm{x})$ for some direction $\bm{a}\in\mathbb{R}^{d}$ with $\left\|\bm{a}\right\|_{1}=1$, $\bm{a}\succeq\bm{0}$, $\bm{x}\in[0,1]^{d}$ and some univariate function $g$ satisfying conditions in Theorem~\ref{thm::anyfunction}, then there exists a multilayer neural network $\tilde{f}$ with $\mathcal{O}\left(\log\frac{1}{\varepsilon}\right)$ layers, $\mathcal{O}\left(\log\frac{1}{\varepsilon}\right)$ binary step units and $\mathcal{O}\left(\left(\log\frac{1}{\varepsilon}\right)^{2}\right)$ rectifier linear units such that
$|f(\bm{x})-\tilde{f}(\bm{x})|\le \varepsilon$ for $\forall \bm{x}\in[0,1]^{d}.$
\end{corollary}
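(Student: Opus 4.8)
The plan is to exploit the ridge structure to collapse the $d$-dimensional approximation problem to a one-dimensional one, and then invoke Theorem~\ref{thm::anyfunction} directly. First I would observe that the constraints $\bm{a}\succeq\bm{0}$ and $\|\bm{a}\|_{1}=1$ force the scalar $t=\bm{a}^{T}\bm{x}=\sum_{i=1}^{d}a_{i}x^{(i)}$ to lie in $[0,1]$ whenever $\bm{x}\in[0,1]^{d}$, since $t$ is then a convex combination of the coordinates $x^{(i)}\in[0,1]$. Consequently $f(\bm{x})=g(t)$ with $t\in[0,1]$, so approximating $f$ over the cube $[0,1]^{d}$ reduces to approximating the univariate function $g$ on the interval $[0,1]$, which is precisely the setting of Theorem~\ref{thm::anyfunction}.

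Next I would compute $t$ exactly using a single rectifier linear unit. Because $a_{i}\ge0$ and $x^{(i)}\ge0$, we have $\sum_{i}a_{i}x^{(i)}\ge0$, so $\sigma\!\left(\sum_{i}a_{i}x^{(i)}\right)=\sum_{i}a_{i}x^{(i)}=t$ under the ReLU activation; this introduces no approximation error and costs only one extra neuron and one extra layer. I would then feed $t$ into the univariate network guaranteed by Theorem~\ref{thm::anyfunction}: since $g$ satisfies the smoothness hypotheses of that theorem, there is a network $\tilde{g}$ with $\mathcal{O}\!\left(\log\frac{1}{\varepsilon}\right)$ layers, $\mathcal{O}\!\left(\log\frac{1}{\varepsilon}\right)$ binary step units and $\mathcal{O}\!\left(\left(\log\frac{1}{\varepsilon}\right)^{2}\right)$ rectifier linear units such that $|g(t)-\tilde{g}(t)|\le\varepsilon$ for all $t\in[0,1]$.

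Setting $\tilde{f}(\bm{x})=\tilde{g}(\bm{a}^{T}\bm{x})$, the error bound is then immediate:
\begin{align*}
|f(\bm{x})-\tilde{f}(\bm{x})|=\left|g(\bm{a}^{T}\bm{x})-\tilde{g}(\bm{a}^{T}\bm{x})\right|\le\varepsilon,
\end{align*}
where the last step uses that $\bm{a}^{T}\bm{x}\in[0,1]$ so that the guarantee for $\tilde{g}$ applies. The overall size is dominated by $\tilde{g}$: prepending the single projection unit leaves the counts at $\mathcal{O}\!\left(\log\frac{1}{\varepsilon}\right)$ layers, $\mathcal{O}\!\left(\log\frac{1}{\varepsilon}\right)$ binary step units and $\mathcal{O}\!\left(\left(\log\frac{1}{\varepsilon}\right)^{2}\right)$ rectifier linear units, independent of $d$.

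The argument is essentially a reduction, so there is no single hard analytic step; the only point requiring care is verifying that the nonnegativity and $\ell_{1}$-normalization of $\bm{a}$ keep the projected value $t$ inside $[0,1]$, the exact domain on which Theorem~\ref{thm::anyfunction} is stated. Without this containment one could not invoke the univariate guarantee without first rescaling $g$ to the range of $\bm{a}^{T}\bm{x}$. This same containment is what makes the bound free of any dependence on $d$, in contrast to the Gaussian example of Corollary~\ref{cor::gaussianfunction}, where the argument fed into the univariate outer function grows with the dimension and forces the extra $d\log\frac{d}{\varepsilon}$ terms.
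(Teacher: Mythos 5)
Your proof is correct and follows essentially the same route as the paper: reduce to the univariate case by noting $t=\bm{a}^{T}\bm{x}\in[0,1]$ (a convex combination of coordinates in $[0,1]$), invoke Theorem~\ref{thm::anyfunction} for $g$, and define $\tilde{f}(\bm{x})=\tilde{g}(\bm{a}^{T}\bm{x})$. Your extra explicit ReLU for computing $t$ is harmless but unnecessary, since the projection $\bm{a}^{T}\bm{x}$ can be absorbed directly into the input-layer weights of $\tilde{g}$; otherwise the two arguments coincide.
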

\begin{proof}
Let $t=\bm{a}^T\bm{x}$. Since $\left\|\bm{a}\right\|_{1}=1$, $\bm{a}\succeq\bm{0}$ and $\bm{x}\in[0,1]^{d}$, then ${0\le t\le1}$. Then  from Theorem~4, it follows that then there exists a multilayer neural network $\tilde{g}$ with $\mathcal{O}\left(\log\frac{1}{\varepsilon}\right)$ layers, $\mathcal{O}\left(\log\frac{1}{\varepsilon}\right)$ binary step units and $\mathcal{O}\left(\left(\log\frac{1}{\varepsilon}\right)^{2}\right)$ rectifier linear units such that
$$|g(t)-\tilde{g}(t)|\le \varepsilon,\quad \forall t\in[0,1].$$
If we define the deep network $\tilde{f}$ as 
$$\tilde{f}(\bm{x})=\tilde{g}(t),$$
then the approximation error of $\tilde{f}$ is 
$$|f(\bm{x})-\tilde{f}(\bm{x})|=|g(t)-\tilde{g}(t)|\le \varepsilon.$$
Now we have proved the corollary.
\end{proof}

\end{appendices}

\end{document}